\newcommand{\mar}{2.54cm}
\theoremstyle{plain}
\newtheorem{theorem}{Theorem}
\newtheorem{lemma}{Lemma}
\newtheorem{proposition}{Proposition}
\newtheorem{coro}{Corollary}
\theoremstyle{definition}
\newtheorem{definition}{Definition}
\newtheorem{example}{Example}
\tikzstyle{vertex}=[circle,fill=black!10,minimum size=20pt,inner sep=0pt]
\tikzstyle{edge} = [draw,thick]
\tikzstyle{weight} = [font=\small]
\tikzstyle{redundant edge} = [draw,line width=5pt,-,red,opacity=.7]
\pgfplotsset{width=.30\linewidth,compat=1.9}
\DeclarePairedDelimiterX\Set[1]\{\}{%
  
  #1 }
\DeclareMathOperator*{\argmax}{arg\:max}
\newcommand{\norm}[1]{\lVert#1\rVert}
\newcommand{\algorithmfootnote}[2][\footnotesize]{%
  \let\old@algocf@finish\@algocf@finish
  \def\@algocf@finish{\old@algocf@finish
    \leavevmode\rlap{\begin{minipage}{\linewidth}
    #1#2
    \end{minipage}}%
  }%
}
\colorlet{SL}{red!20!yellow!60!white}
\colorlet{SLline}{SL!80!black}
\newcommand{\mahjong}{\mathcal{M}}
\newcommand{\M}{\mathcal{M}}
\newcommand{\dist}{{\sf{dist}}}
\newcommand{\dfncy}{{\sf{dfncy}}}
\newcommand{\sset}{{\sf{Set}}}
\newcommand{\Res}{{\sf{Res}}}
\newcommand{\cost}{{\sf{cost}}}
\newcommand{\discard}{{\sf{discard}}}
\newcommand{\val}{{\sf{val}}}
\journal{ArXiv}
\begin{document}

\begin{frontmatter}


\title{Let's Play Mahjong!}


\author[uts]{Sanjiang Li}
\address[uts]{Centre for Quantum Software and Information,   University of Technology Sydney, Sydney, Australia}
\ead{Sanjiang.Li@uts.edu.au}

\author[snnu]{Xueqing Yan}
\address[snnu]{School of Computer Science, Shaanxi Normal University, Xi'an, China}
\ead{XueqingYan@snnu.edu.cn} 
\begin{abstract}
Mahjong is a very popular tile-based game commonly played by four players. Each player begins with a hand of 13 tiles and, in turn, players draw and discard (i.e., change) tiles until they complete a legal hand using a 14th tile. In this paper, we initiate a mathematical and AI study of the Mahjong game and try to answer two fundamental questions: how bad is a hand of 14 tiles? and which tile should I discard? We define and characterise the notion of deficiency and present an optimal policy to discard a tile in order to increase the chance of completing a legal hand within $k$ tile changes for each $k\geq 1$.  
\end{abstract}

\begin{keyword}
Mahjong \sep 14-tile \sep complete
\sep deficiency \sep decomposition \sep pseudo-decomposition 


\end{keyword}
\end{frontmatter}


\section{Introduction}
\label{S:1}
From the very beginning of AI research,  from checker \cite{Samuel59}, chess \cite{shannon1988programming}, Go \cite{Silver+16} to poker \cite{bowling2015heads} and StarCraft II\footnote{https://deepmind.com/blog/alphastar-mastering-real-time-strategy-game-starcraft-ii/}, games have played as test-beds of many AI techniques and ideas. In the past decades, we have seen AI programs that can beat best human players in perfect information games including checker, chess and Go, where players know everything occurred in the game before making a decision. Imperfect information games are 
more challenging. Very recently, important progress has been made in solving the two-player heads-up limit  Texas hold'em poker \cite{bowling2015heads} and its no-limit version \cite{BrownS17}, which are the smallest variants of poker played competitively by humans. In this paper, we initiate a mathematical and AI study of the more popular and more complicated Mahjong game.

Mahjong is a very popular tile-based multiplayer game played worldwide. The game is played with a set of 144 tiles based on Chinese characters and symbols (see Figure~\ref{fig:mjtiles})  and has many variations in tiles used, rules, and scores \cite{Wiki_mahjong}. Each player begins with a hand of 13 tiles and, in turn, players draw and discard (i.e., change) tiles until they complete a legal hand using a 14th tile.

In this paper, for simplicity, we only consider Mahjong-0, the very basic and essential version of Mahjong. The other variations can be dealt with analogously. There are only three types of tiles used in Mahjong-0:
\begin{itemize}
    \item Bamboos: $B1, B2, ..., B9$, each with four identical tiles
    \item Characters: $C1, C2, ..., C9$, each with four identical tiles
    \item Dots: $D1, D2, ..., D9$, each with four identical tiles
\end{itemize}
In this paper, we call \emph{Bamboo} ($B$), Character ($C$), Dot ($D$) \emph{colours} of the tiles, and write $\M_{0}$ for the set of tiles in Mahjong-0, which include in total 108 tiles. 

\begin{figure}[tbp]
    \centering
    \includegraphics[width=0.95\textwidth]{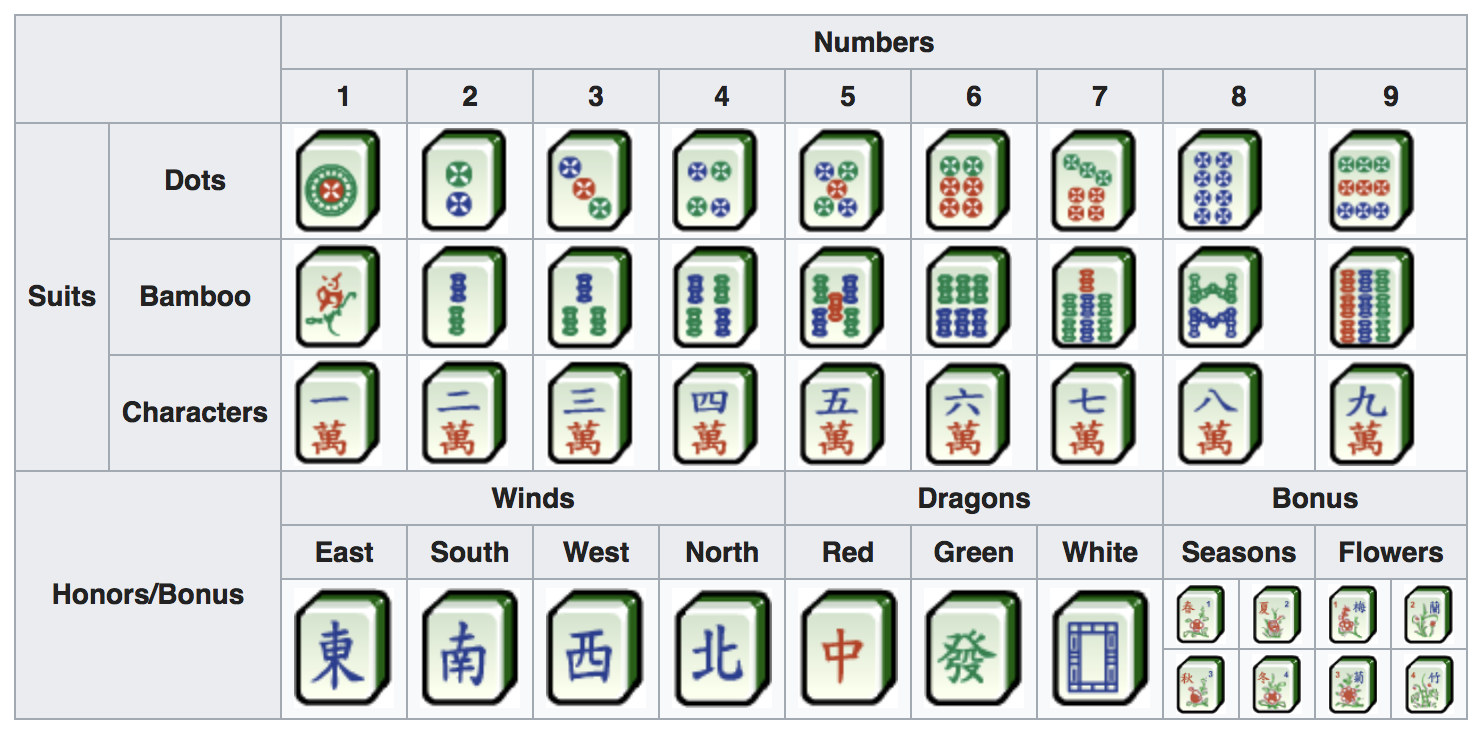}
    \caption{Mahjong tiles, from https://en.wikipedia.org/wiki/Mahjong}\label{fig:mjtiles}
    \label{fig:my_label}
\end{figure}

\begin{definition}
An \emph{eye} is a pair of identical tiles, a \emph{pong} (\emph{kong}) is a sequence of three (four) identical tiles. A \emph{chow} is a sequence of three consecutive tiles of the same colour. A \emph{meld} is either a pong or a chow. 
\end{definition}

In the analysis of this paper, we need the following non-standard notion.
\begin{definition}\label{dfn:pmeld}
A \emph{pseudochow} (\emph{pchow} for short) is a pair of tiles of the same colour such that it becomes a chow if we add an appropriate tile with the same colour. A \emph{pseudomeld} (\emph{pmeld} for short) is a pchow or a pair.  We say a tile $c$ \emph{completes} a meld $(ab)$ if $(abc)$ (after alphabetic sorting) is a meld. Similarly, we say a pair is completed from a single tile $t$ if it is obtained by adding an identical tile to $t$.
\end{definition}

For example, $(B3B4B5)$ is a chow, $(C1C1)$ is an eye, $(B7B7B7)$ is a pong, $(D9D9D9D9)$ is a kong, and $(B1B3)$ and $(C2C3)$ are two pchows.

The remainder of this paper is organised as follows. In Section~2, we introduce basic notions related to pure and hybrid 14-tiles, and characterise how bad a pure or hybrid 14-tile is in, respectively, Sections 3 and 4. Section~5 then presents an optimal policy for discarding tiles, in order to increase the possibility of completing a 14-tile within at most $k$ tile changes. Concluding remarks and directions for future research are given in Section~6. 

\section{Pure and hybrid 14-tiles}

Recall that each player begins with a set of 13 tiles (often called a \emph{hand}). Then players may change a tile by either drawing a new tile or robbing a tile, say $t$, discarded by another player if the player has two identical tiles $t$ and wants to consolidate a pong $(ttt)$ or if $t$ completes her 13-hand into a legal hand.\footnote{In $\mahjong_0$ we assume that kongs play no role in forming legal hands.} In this paper, we take the following defined 14-tiles as our object of study. 
\begin{definition}
\label{dfn:14-tile}
A 14-tile is a sequence $S$ of 14 tiles from $\M_0$. A sequence of tiles is \emph{pure} if they are all of the same colour and is \emph{hybrid} if otherwise.
\end{definition}
Apparently, no identical tiles can appear more than four times in a 14-tile. To be precise, we often write a tile $t$ as a pair $(c,n)$ such that $c \in \{0,1,2\}$ denotes the colour of $t$ and $n$ denotes the number of $t$. Here we assume that $0, 1,2$ denote, respectively, Bamboo, Character, and Dot. For example, $(0,3)$ and $(1,5)$ denote $B3$ and $C5$ respectively. Moreover, we assume that each 14-tile is represented in the standard form in the following sense.

\begin{definition}
\label{dfn:standard14-tile}
The \emph{standard form} of a (hybrid) 14-tile is a sequence $H$ of pairs $(c,n)$ with $0\leq c \leq 2$ and $1\leq n \leq 9$. We denote as $H[i]=(c_i,n_i)$ the $(i+1)$-th tile in $H$ and require, for each $0\leq i \leq 13$, that $c_i \leq c_{i+1}$ and $n_i \leq n_{i+1}$ if $c_i = c_{i+1}$.
We assume that there are no five identical tiles.
\end{definition}

For the purpose of this work, a pure 14-tile of Bamboos is not different from a pure 14-tile of Dots or a pure 14-tile of Characters. Thus we often write a pure 14-tile simply as a 14-tuple of integers from 1 to 9 if the colour of the tiles is clear from the context or not important. Formally, we have
\begin{definition}
A \emph{pure} 14-tile $S$ is a sequence of 14 integers with non-decreasing integer value from 1 to 9 and no value appears more than four times. 
\end{definition}
Suppose $S$ is a sequence of 14 integers. Write $S[i]$ for the $(i+1)$-th value of $S$ for $0 \leq i \leq 13$, i.e., we assume, as in Python and many other programming languages, that $S[0]$ represents the first value of $S$. Then $S$ is a pure 14-tile if and only if:
\begin{itemize}
    \item $S[i] \in \mathbb{N}$ and $1\leq S[i] \leq 9$ for each $0\leq i \leq 13$; and
    \item $S[i] \leq S[i+1]$ for any $0\leq i \leq 12$; and
    \item $S[i] < S[i+4]$ for any $0 \leq i \leq 9$.
\end{itemize}

For each pure 14-tile $S$, we define its \emph{set sequence}, written $\sset(S)$, as the subsequence of $S$ obtained by deleting, for each tile $(c,n)$ in $S$, all but the first occurrences of $(c,n)$ in $S$, and define its \emph{residual sequence} as  $\Res(S) \equiv S\setminus \sset(S)$.\footnote{In this paper, we regard a sequence as a set and identical items in the same sequence are considered as different elements in the corresponding set. Thus, suppose $W$ is a subsequence of $V$, $V\setminus W$ denotes the subsequence with items in $W$ removed from $V$.}
For example, consider the following pure 14-tile $S$ and its set and residual sequences:
\begin{align*}
    S \quad &=\ (1,1,1,2,2,3,3,4,7,7,7, 8,8,9,9) \\
    \sset(S) &=\ (1,2,3,4,7,8,9) \\
    \Res(S) & =\ (1,1,2,3,7,7,8,9)
\end{align*}
For convenience, we also write these sequences compactly as\\ \quad $S=(111223347778899)$, $\sset(S)=(1234789)$, and $\Res(S)=(11237789)$.

\begin{definition}
\label{dfn:complete&decomposition}
A 14-tile over $\M_0$ is \emph{legal} or \emph{complete} if it can be decomposed into four melds and one eye. Given a complete 14-tile $T$, a \emph{decomposition} $\pi$ of $T$ is a sequence of five subsequences of $V$ such that 
$\pi(5)$ is a pair (the eye) and, for $1\leq i\leq 4$, each $\pi(i)$ is a meld.
\end{definition}
\begin{definition}
\label{dfn:14tile_graph}
Two 14-tiles $S, T$ over $\M_0$  are \emph{neighbours} if $S[i] = T[i]$ holds for all but exactly one $i \in [0,13]$. The 14-tile graph is the pair $(\mathcal{T},E)$, where $\mathcal{T}$ is the collection of all 14-tiles and two 14-tiles $S,T$ are connected by an edge if $S,T$ are neighbours. The distance between two 14-tiles $S,T$ is the length of a shortest path from $S$ to $T$ in the 14-tile graph. 
\end{definition}
\begin{definition}
\label{dfn:dfncy}
The \emph{deficiency number} (or simply \emph{deficiency}) of a 14-tile $S$ is defined recursively:
\begin{itemize}
    \item A 14-tile has deficiency 0 if it is complete;
    \item A 14-tile has deficiency 1 if it is not complete but has a neighbour which is complete;
    \item In general, for $\ell\geq 0$, a 14-tile has deficiency $\ell+1$ if it has no deficiency smaller than or equal to $\ell$ and it has a neighbour which has deficiency $\ell$.
\end{itemize}
If the deficiency of $S$ is $\ell$, we write $\dfncy(S) = \ell$.
\end{definition}

For example, the following 14-tile is complete and, thus, has deficiency 0.
\begin{align}\label{mjsol1}
H &= (B1B2B2B3B3B4B7B7B7)(C1C1)(D4D5D6)
\end{align}
In this example, we have the following decomposition
\begin{align*}
    \pi &= (B1B2B3)(B2B3B4)(B7B7B7)(D4D5D6)(C1C1).
\end{align*}

A 14-tile $T$ is often not complete. The deficiency number of $T$ measures how bad $T$ is by counting the number of necessary tile changes to make it complete.

In the following two sections, we consider how to compute the deficiency number of a pure and, respectively, a hybrid 14-tile. 

\section{Deficiency of pure 14-tiles}
In this section we prove the following theorem.
\begin{theorem}\label{thm:pure}
Any pure 14-tile has deficiency less than or equal to 3.
\end{theorem}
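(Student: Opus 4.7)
The plan is to prove the bound constructively: for every pure 14-tile $S$, I would exhibit a complete 14-tile $T$ at distance at most $3$ in the 14-tile graph of Definition~\ref{dfn:14tile_graph}. I represent $S$ by its histogram $(a_1,\ldots,a_9)$ on the nine values with $\sum_i a_i=14$ and $0\le a_i\le 4$. Since each tile change decrements one $a_i$ and increments another, it suffices to find a complete-hand histogram $(b_1,\ldots,b_9)$ with $\frac{1}{2}\sum_i|a_i-b_i|\le 3$.

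Two pigeonhole observations set the stage. First, $\sum_i a_i=14>9$ together with $a_i\le 4$ forces at least two distinct values to have $a_v\ge 2$; indeed, if only one value had $a_v\ge 2$ the total would be at most $4+8=12$. Hence $S$ always contains an eye candidate. Second, $m:=|\sset(S)|$ satisfies $m\ge\lceil 14/4\rceil=4$. With these in hand, for each eye candidate $e$ I would apply a greedy pmeld-decomposition to $S\setminus\{e,e\}$: extract disjoint full melds (pongs and chows) first, then pmelds (pchows and pairs, Definition~\ref{dfn:pmeld}), then isolate the leftover singletons. Each complete meld contributes $0$ to the replacement cost, each pmeld contributes $1$ (one change completes it into a meld), and each singleton contributes at most $2$. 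The theorem reduces to the assertion that \emph{some} eye choice yields total cost at most $3$.

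The main obstacle will be verifying this for every possible histogram, which I would address by a case analysis on $m=|\sset(S)|$. When $m$ is large (say $m\ge 7$), at most two values are absent from $\{1,\ldots,9\}$, so the present values cluster into a few long runs inside which chows are abundant and at most one across-gap swap handles any missing value. When $m$ is small ($m\le 5$), the constraint $\sum_i a_i=14$ forces heavy repetition, so several ready-made pongs appear and eye candidates abound; three pongs plus one pchow completion typically suffice. The genuinely tight intermediate regime requires a careful interplay between relocating the eye and performing at most one cross-gap swap, and here the pmeld accounting above still yields a total cost of at most $3$. Finally, I would exhibit a concrete $S$ (for instance, one with widely separated pairs) whose deficiency equals $3$, confirming that the bound in Theorem~\ref{thm:pure} is tight.
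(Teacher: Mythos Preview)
Your outline parallels the paper's strategy---case split on $m=|\sset(S)|$---but the central step is missing rather than merely sketched. After fixing an eye and running a greedy meld/pmeld extraction on the remaining twelve tiles, you assert that the resulting cost is at most $3$; this is precisely the theorem, and none of your case descriptions (``chows are abundant,'' ``three pongs plus one pchow completion typically suffice,'' ``still yields a total cost of at most $3$'') actually verifies it. A configuration such as one meld, two pmelds, and five leftover singletons has cost $4$ under your accounting, so you must rule such shapes out, and that is the real work. The paper does this by first proving a nontrivial lemma (Proposition~\ref{prop0}): any pure 14-tile containing two disjoint melds already has deficiency at most $2$. That single lemma disposes of the ranges $m\in\{4,5,8,9\}$ immediately and reduces $m\in\{6,7\}$ to the situation where no two disjoint melds exist, which forces a very rigid structure (essentially one kong or pong plus several pairs) that is easy to finish by hand.

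Two further issues you do not address: (i) a pmeld is not always completable---if the needed tile already appears four times in $S$, the ``cost $1$'' step fails---and the paper's appendix proof of Proposition~\ref{prop0} spends most of its length on exactly this obstruction; (ii) a greedy extraction of melds first, then pmelds, is not guaranteed to be optimal, so even a completed case analysis would have to argue either that greedy suffices or switch to a non-greedy decomposition. Your plan is a reasonable starting point, but without an analogue of Proposition~\ref{prop0} or an explicit exhaustive verification it does not yet constitute a proof.
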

To prove this theorem, we need to make several simple observations. First, there exists a pure 14-tile that has deficiency 3:
\begin{align}\label{14-tile_dfncy=3}
V & = (1 1 2 2 5 5 6 6 8 888 99)
\end{align}
Indeed, this pure 14-tile contains only one meld, viz. $(888)$. To get four melds, we need to replace some tile with another one (with the same colour). Since each tile can be used to construct at most one meld or one eye, we need at least 3 tile changes to obtain four melds. Suppose without loss of generality this pure 14-tile is in colour $B$. Then we may replace both $B2$ with $B4$, obtaining two chows $(B4B5B6)$, and replace one $B8$ with $B9$, obtaining a pong $(B9B9B9)$.  This example leads to the following fact.
\begin{lemma}\label{fact<=3}
There exists a pure 14-tile with deficiency 3.
\end{lemma}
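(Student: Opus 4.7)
The plan is to verify that the explicit pure 14-tile $V=(11225566888899)$ displayed immediately above has deficiency exactly $3$ by separately establishing $\dfncy(V)\le 3$ and $\dfncy(V)\ge 3$. The upper bound is a one-line constructive witness; the lower bound is where the real work lies.

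For $\dfncy(V)\le 3$ I would take the three tile swaps already suggested in the paragraph preceding the lemma: replace both $2$'s by $4$'s and replace one $8$ by a $9$. The resulting sequence $V'=(11445566888999)$ admits the decomposition $(11)(456)(456)(888)(999)$ into one eye and four melds, so $V'$ is complete. Since $V$ and $V'$ differ in exactly three positions, Definition~\ref{dfn:dfncy} gives $\dfncy(V)\le 3$.

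For $\dfncy(V)\ge 3$ the key observation is that, as a multiset of tile values, $V$ itself contains only one meld, namely the pong $(888)$: no three consecutive integer values all appear in $V$ (the distinct values present are $\{1,2,5,6,8,9\}$), so no chow can be carved out of $V$, and only the value $8$ occurs at least three times. Now suppose, for contradiction, that some complete 14-tile $W$ lies at distance $k\le 2$ from $V$, and fix a decomposition of $W$ into four disjoint meld-triples and one eye-pair. Any meld-triple of $W$ whose three positions all agree with $V$ is also a value-meld of $V$, and hence must be a pong $(888)$ drawn from three of the four $8$-positions of $V$. Because distinct melds in the decomposition are position-disjoint and $V$ has only four $8$-positions, at most one meld of $W$ can be such an ``all-agreed'' meld; the remaining three melds must each contain at least one position at which $V$ and $W$ disagree. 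Disjointness then forces at least three disagreement positions, contradicting $k\le 2$.

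The step I expect to require the most care is the passage from ``$V$ has one value-level meld'' to ``at most one meld in any decomposition of $W$ is supported on unchanged positions'', since $V$ contains four copies of $8$ and the single value-level meld $(888)$ corresponds to $\binom{4}{3}=4$ distinct position-level meld instances. Working at the level of positions rather than values turns the count into a clean pigeonhole: four melds of $W$ are position-disjoint, $V$ has only four $8$-positions, and hence only one meld of $W$ can live entirely on $V$'s $8$-positions.
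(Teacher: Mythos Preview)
Your proof is correct and follows essentially the same approach as the paper's informal argument in the paragraph preceding the lemma: both use the example $V=(11225566888899)$, both establish the upper bound via the same three swaps, and both derive the lower bound from the observation that $V$ contains only the single meld $(888)$, so any complete target must supply at least one ``new'' tile to each of three further disjoint melds. Your version makes the lower-bound step rigorous by working at the level of tile-slots (positions) and invoking disjointness of melds in the target decomposition, whereas the paper states the same idea in one sentence (``since each tile can be used to construct at most one meld or one eye, we need at least 3 tile changes to obtain four melds'').
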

By Definition~\ref{dfn:pmeld}, a pair of tiles say $(Bi,Bj)$ is a pmeld if $|j-i| \leq 2$. In what follows, we assume that a meld (kong) can also be counted as a pmeld (pong).
\begin{lemma}
\label{fact2}
Any four tiles of the same colour contain a pmeld. 
\end{lemma}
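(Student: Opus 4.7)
The plan is to argue by a pigeonhole/contradiction on the gaps between the (sorted) values of the four tiles. Since all four tiles share a colour, I can identify each with its numeric value in $\{1,2,\dots,9\}$. The characterisation noted just before the lemma says that a pair $(Bi,Bj)$ is a pmeld exactly when $|j-i|\le 2$ (with equal values giving an eye, and $|j-i|\in\{1,2\}$ giving a pchow). So the task reduces to showing that among any four values in $\{1,\dots,9\}$, two of them differ by at most $2$.

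First I would sort the four tiles as $t_1\le t_2\le t_3\le t_4$ and assume, for contradiction, that they contain no pmeld. Then every pair of them would satisfy $|t_a-t_b|\ge 3$; in particular, consecutive values in the sorted order would satisfy $t_{i+1}-t_i\ge 3$ for $i=1,2,3$. Adding these three inequalities gives
\[
t_4 - t_1 \;=\; (t_2-t_1)+(t_3-t_2)+(t_4-t_3) \;\ge\; 9.
\]
But $1\le t_1$ and $t_4\le 9$, so $t_4-t_1\le 8$, a contradiction. Hence at least one of the three consecutive gaps is at most $2$, and the corresponding pair is a pmeld.

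There is essentially no obstacle beyond checking the trivial edge case of repeated values: if $t_i = t_{i+1}$ for some $i$, then the gap is $0$ and those two tiles already form an eye (hence a pmeld by Definition~\ref{dfn:pmeld}), so the argument still goes through without modification. The whole statement is really just the observation that four points in a length-$8$ interval cannot all be pairwise at distance $\ge 3$, which is why this short lemma can be recorded as a ``fact'' to be used repeatedly in the subsequent deficiency analysis.
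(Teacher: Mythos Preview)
Your proof is correct and follows essentially the same approach as the paper: sort the four values, handle the repeated-value case as a pair, and in the distinct case observe that the three consecutive gaps cannot all exceed $2$ since their sum is at most $8$. The only difference is cosmetic---you write out the telescoping sum explicitly, whereas the paper simply asserts that $b-a>2$, $c-b>2$, $d-c>2$ cannot hold simultaneously.
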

\begin{proof}
Suppose $V=(abcd)$ is a set of four tiles of the same colour and $1\leq a \leq b \leq c \leq d \leq 9$. If $a=b$ or $b=c$ or $c=d$, we have a pair and thus a pmeld. Now, suppose $1\leq a < b < c < d \leq 9$. It's easy to see that $b-a > 2$, $c-b > 2$, and $d-c > 2$ cannot happen in the same time. Thus $V$ has at least one pmeld. 
\end{proof}
The following notion of `disjoint' pmelds/melds is important.
\begin{definition}
\label{dfn:disjoint}
Suppose $V$ is a sorted tuple of $k$ ($3\leq k\leq 14$) tiles of the same colour. We say two sorted pmelds $(uv),(xy)$ contained in $V$ are \emph{disjoint}, if $(uv)$ and $(xy)$ are not identical pairs and there exist indexes $0\leq i<j<p<q<k$ such that $u=V[i], v=V[j], x=V[p]$ and $y=V[q]$.

Similarly, two sorted melds $(abc)$ and $(xyz)$ are \emph{disjoint} if there exist indexes $0\leq i_1<i_2<i_3<i_4<i_5<i_6<k$ such that $a=V[i_1], b=V[i_2], c=V[i_3], x=V[i_4], y=V[i_5]$, and $z=V[i_6]$. 
\end{definition}
In the same sense, we can also define when a meld and a pmeld are disjoint, and when $V$ contains three or more disjoint pmelds.

Consider the tuple $W=(1248)$ of four tiles of the same colour. It has two pmelds, viz. (12) and (24), but they are not disjoint in our sense. Furthermore, consider $V=(111158)$. It has two identical pairs $(11)$, which are in a sense competing with each other and cannot both be useful in forming a legal hand. That's why we don't consider them as disjoint pmelds.   

For six tiles of the same colour, we have the following result.
\begin{lemma}
\label{fact3}
Suppose $V$ is a tuple of six tiles of the same colour. If $V$ contains no kong, then $V$ contains at least two disjoint pmelds. If, in addition, $V$ contains six pairwise different tiles, then $V$ contains a chow and a disjoint pchow or contains three disjoint pchows. 
\end{lemma}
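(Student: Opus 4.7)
The lemma has two parts. My overall plan is to partition the sorted tuple $V=(V[0],\dots,V[5])$ into the left half $L=(V[0],V[1],V[2])$ and the right half $R=(V[3],V[4],V[5])$, and reason about which side supplies which pmeld.

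For the first assertion I examine whether both halves contain a pmeld. If they do, then the two pmelds use position indices contained in $\{0,1,2\}$ and $\{3,4,5\}$ respectively, so the strict ordering $i<j<p<q$ required by Definition~\ref{dfn:disjoint} is automatic; they also cannot be identical pairs of a single tile, since then that tile would appear four times and $V$ would contain a kong. If instead, say, $L$ contains no pmeld, then its three tiles are strictly increasing with consecutive gaps at least $3$, forcing $V[2]\ge V[0]+6\ge 7$. Hence $V[3],V[4],V[5]\in[7,9]$, both $(V[2],V[3])$ and $(V[4],V[5])$ have difference at most $2$, and together they form two pmelds at positions $(2,3)$ and $(4,5)$ (the no-kong hypothesis again ruling out the identical-pair obstruction). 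The case that $R$ has no pmeld is the mirror image (all of $V[0],\dots,V[3]$ are pushed into $[1,3]$), and $L$ and $R$ cannot both be pmeld-free because $V[2]\ge 7$ and $V[3]\le 3$ are incompatible with $V[2]\le V[3]$.

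For the second assertion, assume all six tiles are distinct and split on whether $V$ contains a chow. In the no-chow case I will show that the canonical adjacent pairs $(V[0],V[1])$, $(V[2],V[3])$, $(V[4],V[5])$ are all pchows, which is exactly the required triple of disjoint pchows. If, say, $(V[0],V[1])$ has difference at least $3$ then $V[1]\ge 4$ and $V[2],\dots,V[5]$ are four distinct values in $[V[1]+1,9]$; a short enumeration on which value of that range (if any) is skipped shows every such configuration already contains three consecutive integers, contradicting the no-chow hypothesis. The arguments for $(V[4],V[5])$ (symmetric) and $(V[2],V[3])$ (analogous, once both end-pairs are known to be pchows) are similar. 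In the chow case, a chow occupies three consecutive positions $(i,i+1,i+2)$ of $V$ since the tiles are distinct; for $i\in\{0,3\}$ the three remaining tiles lie in an interval of length at most $5$, so some adjacent pair among them has difference at most $2$ and yields a disjoint pchow. For the middle cases $i\in\{1,2\}$ I argue by contradiction: assume that no chow present in $V$ admits a disjoint pchow. If the middle chow sits at positions $(1,2,3)$ with smallest tile $a$, then absence of a chow at $i=0$ forces $V[0]\le a-2$, absence of a pchow at the right end forces $V[5]\ge V[4]+3\ge a+6$, and a short sub-case analysis (according to whether a second chow appears at $i=2$, and if so whether it itself admits a disjoint pchow at positions $(0,1)$) shows the combined constraints cannot hold for any $a$ with $V[0]\ge 1$ and $V[5]\le 9$. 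The case of a chow at $(2,3,4)$ is symmetric.

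The main obstacle I anticipate is precisely this middle-chow sub-case, because greedy chow selection does not work: in $V=(1,3,4,5,6,9)$ the chow $(3,4,5)$ has no disjoint pchow, whereas the chow $(4,5,6)$ admits the disjoint pchow $(1,3)$. For this reason I will not try to pick a single canonical chow; instead I argue globally by contradiction, aggregating the no-disjoint-pchow constraints arising from every chow that happens to be present in $V$, and showing that the resulting bounds at the two ends of $V$ cannot simultaneously lie inside $[1,9]$. Once this step is in place the remainder of the proof reduces to tight interval arithmetic combined with the no-kong hypothesis, and I do not expect further subtleties.
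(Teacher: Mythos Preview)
Your approach diverges from the paper's in both assertions. For the first, the paper splits on whether $V$ contains a pair (if so, apply Lemma~\ref{fact2} to the remaining four tiles; if not, the six tiles are distinct and the second assertion applies), whereas you split $V$ geometrically into a left half and a right half. For the second, the paper gives a single counting argument: the five consecutive gaps sum to $V[5]-V[0]\le 8$, so at most one of them can exceed $2$ (otherwise the sum is at least $3+3+1+1+1=9$), and the paper then simply asserts that this yields a chow plus a disjoint pchow or three disjoint pchows. Your chow/no-chow dichotomy with explicit position tracking is considerably more detailed than what the paper writes, and your anticipation of the middle-chow obstruction (the example $(1,3,4,5,6,9)$, where the chow $(3,4,5)$ has no disjoint pchow but $(4,5,6)$ does) is well taken --- the paper's one-line conclusion glosses over exactly this point.

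There is, however, one concrete slip in your no-chow sub-case. When $V[1]-V[0]\ge 3$ you deduce $V[1]\ge 4$ and then claim that any four distinct values $V[2],\dots,V[5]$ in $[V[1]+1,9]$ must contain three consecutive integers. For $V[1]=4$ this is false: the $4$-subset $\{5,6,8,9\}$ of $\{5,\ldots,9\}$ has no three consecutive elements. Your desired conclusion (that $V$ contains a chow) is nevertheless correct, but only because $V[1]$ itself participates: in the offending case $V=(V[0],4,5,6,8,9)$ and the chow is $(4,5,6)$ at positions $1,2,3$. The fix is immediate --- enumerate over the five values $V[1],\dots,V[5]$ lying in $[V[1],9]$ (an interval of at most six integers) rather than only over $V[2],\dots,V[5]$ --- and the remainder of your argument goes through unchanged.
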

\begin{proof}
Suppose $V=(abcdef)$ has no kong. Here we don't assume that $V$ is sorted. First, we consider the case when $V$ contains at least one pair, say $a=b$. By Lemma~\ref{fact2}, there is a pmeld in $(cdef)$, which cannot be $(aa)$ as $V$ has no kong. Thus $V$ contains two disjoint pmelds if it has no kong.  
Second, suppose $V$ contains six different tiles and $a<b<c<d<e<f$. It is easy to see that, among $b-a$, $c-b$, $d-c$, $e-d$, and $f-e$, there exist at most one that is greater than 2. In fact, this is because
$8= 9-1 \geq f-a = (f-e) + (e-d) + (d-c) + (c-b) + (b-a)\geq 1 + 1 + 1 + 1 + 1 = 5$ and $8 < 3+3+1+1+1 = 9$. This shows that $V$ contains one chow and a pchow or contains three disjoint pchows in this case.
\end{proof}

We next show that if $V$ contains two disjoint melds, then it can be completed within two tile changes.
\begin{proposition}\label{prop0}
If $V$ contains two disjoint melds, then $\dfncy(V)\leq 2$.
\end{proposition}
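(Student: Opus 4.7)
The plan is to let $M_1, M_2$ be the two disjoint melds given in $V$ and to set $V' := V \setminus (M_1 \cup M_2)$, an $8$-tile sub-sequence. It suffices to show that $V'$ can be reshaped into two melds and an eye with at most two tile changes, since then, together with $M_1$ and $M_2$, this yields a complete $14$-tile at distance $\le 2$ from $V$.

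I would then split into two cases depending on whether $V'$ itself contains a meld. If $V'$ contains a meld $M_3$, then the five remaining tiles $V' \setminus M_3$ contain a pmeld $P$ by Lemma~\ref{fact2}; pairing $P$ with any one of the other three tiles produces a candidate fourth meld completable in at most one change, while the remaining two tiles can be turned into an eye in at most one change, giving total cost at most two. If $V'$ contains no meld, I would first argue that $V'$ must nevertheless contain a pair: absence of a pong forces every value to occur at most twice in $V'$, so if all eight tiles of $V'$ were distinct then exactly one value of $\{1,\ldots,9\}$ would be missing, and this missing value could lie in at most three of the seven consecutive triples $(1,2,3),\ldots,(7,8,9)$; hence at least four such triples --- each a chow --- would survive inside $V'$, contradicting the no-meld hypothesis. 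Fix a pair $P \subseteq V'$ to serve as the eye and set $W := V' \setminus P$, a $6$-tile sequence with no meld and in particular no kong. Lemma~\ref{fact3} then supplies two disjoint pmelds $Q_1, Q_2$ in $W$; combining each $Q_i$ with one of the two leftover tiles of $W$ yields two meld candidates each completable by a single change, again for a total cost of two.

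The step I expect to be the main obstacle is the pair-existence argument in Case~2, which hinges on the combinatorial observation that removing one value from $\{1,\ldots,9\}$ cannot kill every consecutive triple. A secondary technicality is to verify that the chosen completions never demand a fifth copy of any tile already appearing four times in $V$; since at most three values of $V$ can simultaneously be at the four-copy cap (otherwise $|V| > 14$), Lemma~\ref{fact3} furnishes enough alternative choices of $(Q_1, Q_2)$ --- using pchows instead of offending pairs, or swapping which pair is assigned to $P$ --- to route around the conflict.
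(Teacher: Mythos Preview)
Your overall plan---strip out two disjoint melds and analyse the residual eight tiles---is the same skeleton the paper uses, and your pair-existence argument in Case~2 (eight distinct values force a chow because a single missing value kills at most three of the seven consecutive triples) is a clean observation. But you have mis-identified where the work lies: the completability constraint (never demanding a fifth copy of a tile) is not a ``secondary technicality''; it is essentially the entire content of the proof, and your Case~1 claims are false as stated.

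Concretely, take $V=(1,3,5,5,5,5,7,7,7,7,9,9,9,9)$ with $M_1=(555)$, $M_2=(999)$. Then $V'=(1,3,5,7,7,7,7,9)$ contains the meld $M_3=(777)$, and $V'\setminus M_3=(1,3,5,7,9)$. If you pick the pmeld $P=(13)$, then whichever of $5,7,9$ you pair with $P$ costs one change (none of them completes $(13)$), and the two leftover tiles are two of $\{5,7,9\}$---\emph{all} of which are at the four-copy cap in $V$. So your assertion that ``the remaining two tiles can be turned into an eye in at most one change'' fails: forming an eye here costs two changes, for a total of three. A different choice of $P$ (say $P=(57)$) does work, but your sketch gives no mechanism for selecting a good $P$, and Lemma~\ref{fact2} only promises existence of \emph{some} pmeld. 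The same issue arises for $Q_1,Q_2$ in Case~2: Lemma~\ref{fact3} guarantees two disjoint pmelds exist, not that any particular pair of them is completable, and your ``route around the conflict'' remark is not an argument.

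The paper's proof (Appendix~A) handles exactly this difficulty by stratifying on the \emph{maximum} number of disjoint melds in $V$ (four, exactly three, exactly two) and then, in each stratum, doing a careful case analysis on how many pairs the residual contains, repeatedly arguing by contradiction that if a needed completion were blocked then $V$ would have to contain more disjoint melds than assumed. That machinery is what you are missing; to repair your argument you would need to supply something of comparable detail.
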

\begin{proof}
The proof is not difficult but lengthy, see~\ref{sec:appendix1}.
\end{proof}

For any pure 14-tile $V$, recall that $\sset(V)$ is the set sequence of $V$, containing all different tiles in $V$, and $\Res(V)\equiv V\setminus\sset(V)$, containing all the rest tiles. 

\begin{proof}[Proof of Theorem~\ref{thm:pure}]
We prove Theorem~\ref{thm:pure} case by case.

Suppose $|\sset(V)|=8$ or $|\sset(V)|=9$. 
We have at least two chows in $\{(123),(456),(789)\}$. By Proposition~\ref{prop0}, we know $\dfncy(V)\leq 2$.

Suppose $|\sset(V)| = 4$ or $|\sset(V)| = 5$. Then  $V$ contains at least two different pongs in both cases. By Proposition~\ref{prop0}, we have $\dfncy(V) \leq 2$.

Suppose $|\sset(V)|=6$ or $|\sset(V)|=7$. 
If $V$ contains two or more disjoint melds, by Proposition~\ref{prop0}, we know $\dfncy(V)\leq 2$.

Suppose $|\sset(V)|=7$ and $V$ does not contain two disjoint melds. Let $\sset(V)$ $=(abcdefg)$ with $a<b<c<d<e<f<g$. 
It is easy to see that $\sset(V)$ contains one chow.  We note that $\Res(V)$ must contain a pair, this is because, otherwise, we have  $\Res(V)=\sset(V)$ and $V$ shall contain two chows. Thus $V$ consists of one kong or pong and at least four different pairs. Clearly, $V$ can be completed within three tile changes in this case. That is, $\dfncy(V)\leq 3$.

Suppose $|\sset(V)|=6$ and $V$ does not contain two disjoint melds. In particular, it has no two differnt pongs. Let $\sset(V)=(abcdef)$. This implies that $V$ consists of one kong and five different pairs. Clearly, $V$ can be completed within three tile changes, by completing any three pairs into pongs. This shows $\dfncy(V)\leq 3$.
\end{proof}


It is not difficult to design brute force searching algorithms for counting how many pure 14-tiles that are complete and how many pure 14-tiles that have deficiency 1, 2, 3. 
Our implementation results show that there are 
\begin{itemize}
    \item 118800 valid pure 14-tiles
    \item 13259 complete pure 14-tiles
    \item 91065 pure tiles with deficiency 1
    \item 14386 pure tiles with deficiency 2
    \item 90 pure tiles with deficiency 3 
\end{itemize}
One natural question is, do we have more direct methods to determine the deficiency number of a 14-tile by, e.g., examining its structure? This is surely possibly. For example,  we have the following characterisation for pure 14-tiles with deficiency 3.
\begin{proposition}
\label{lem:7}
For any pure 14-tile $V$, $V$ has deficiency 3 if and only if either
\begin{itemize}
    \item [\rm (i)] $V$ consists of a kong and 5 different pairs; or
    \item [\rm (ii)] $V$ consists of five different pairs, a pong $(p,p,p)$ and a single tile $s$, and there exists $x$ in $V$ such that $(psx)$ (after ordering) is a chow in $V$ and $V\setminus(psx)$ contains no chows.
\end{itemize}
\end{proposition}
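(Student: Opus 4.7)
The plan is to prove both directions by structural case analysis building on the proof of Theorem~\ref{thm:pure}, which already delivers $\dfncy(V)\leq 3$; the proposition characterises exactly when the bound is tight. The forward direction pins down the multiplicity profile of such $V$, while the reverse direction needs both a 3-change completion (upper bound) and an obstruction against any 2-change completion (lower bound).

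For the forward direction, I would restart from the observation inside the proof of Theorem~\ref{thm:pure} that $\dfncy(V)=3$ forces $|\sset(V)|\in\{6,7\}$ and $V$ contains no two disjoint melds, and then enumerate the integer partitions of $14$ into $|\sset(V)|$ parts drawn from $\{1,2,3,4\}$. Any partition with two parts of size at least $3$ yields two distinct pongs, or a pong together with the pong hidden inside a kong, hence two disjoint melds, so such partitions are excluded. The only survivors are $(4,2,2,2,2,2)$ at $|\sset(V)|=6$ and $(3,2,2,2,2,2,1)$ at $|\sset(V)|=7$; the kong-plus-pairs pattern $(4,2,2,2,2,1,1)$ at $|\sset(V)|=7$ is excluded by the auxiliary fact that every seven-element subset of $\{1,\dots,9\}$ contains three consecutive values, producing a chow disjoint from the pong inside the kong. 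Under the $|\sset(V)|=6$ pattern, the same pigeonhole argument forces $\sset(V)$ to contain no three consecutive values (else a chow and the pong from the kong would be disjoint melds), giving (i). Under the $|\sset(V)|=7$ pattern, every chow in $V$ must involve both the pong value $p$ and the single $s$: if a chow omitted $p$, it would be disjoint from $(p,p,p)$, and if a chow omitted $s$, then the chow would survive in $V\setminus(psx)$; so every chow is of the form $(psx)$ for some pair value $x$, giving (ii).

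For the reverse direction, the upper bound is handled by exhibiting an explicit 3-change completion. In case (i), I would move one tile each from the kong and from two of the five pairs into the value of a third pair, upgrading three pairs to pongs, so that the surviving pair and the residual tile of the kong combine with a compatible chow through the kong to furnish the remaining meld and eye. In case (ii), I would aim at a decomposition of shape $(p,p,p),(psx),C_1,C_2,(y,y)$ in which the new chows $C_1,C_2$ are engineered by three carefully placed replacements that borrow from the existing pair tiles to supply the missing consecutive values. The main obstacle will be the lower bound $\dfncy(V)\geq 3$: I would show that no $V^{*}$ at distance at most $2$ from $V$ is complete. Since $V$ contains at most one meld and no two disjoint melds, a putative complete $V^{*}$ within distance $2$ would have to generate three additional melds together with an eye via only two tile replacements. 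A subcase analysis tracking the perturbation of the multiplicity vector (which values gain or lose tiles, whether the pong or kong is disturbed, where a new chow could form) then rules out every possibility, using the absence of three consecutive values in (i) and the uniqueness of the chow $(psx)$ in (ii) to forbid simultaneously producing three new melds while preserving a pair for the eye. This exhaustive exclusion step is the most delicate part of the argument and parallels, in finer form, the reasoning used for Proposition~\ref{prop0}.
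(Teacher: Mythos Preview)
The paper states this proposition without proof, so there is nothing to compare your approach against directly. I can still comment on the soundness of your plan.

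Your forward direction is essentially right, though you skipped the multiplicity profile $(2,2,2,2,2,2,2)$ at $|\sset(V)|=7$: seven distinct values in $\{1,\dots,9\}$ always contain three consecutive ones, so two disjoint copies of that chow give two disjoint melds and Proposition~\ref{prop0} disposes of it. This is easy to patch.

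The genuine problem is the reverse direction, specifically the lower bound $\dfncy(V)\geq 3$ in case~(i). You write ``since $V$ contains at most one meld and no two disjoint melds\dots'', but that property was established in your forward argument \emph{starting from} $\dfncy(V)=3$; it is not a consequence of condition~(i) alone. Indeed, condition~(i) as stated in the paper (``$V$ consists of a kong and 5 different pairs'') does \emph{not} force the absence of chows: take $V=(11112233445566)$, which is a kong $(1111)$ together with five pairs $(22),(33),(44),(55),(66)$, yet admits the decomposition $(123)(123)(456)(456)(11)$ and is therefore complete, with $\dfncy(V)=0$. So either the proposition needs an unstated extra hypothesis in~(i) (namely that $\sset(V)$ contains no three consecutive values, which you yourself derived in the forward direction), or the ``if'' half simply fails. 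Your lower-bound sketch cannot be repaired without first restricting~(i) in this way, because the premise you rely on is false in general.

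In short: your forward analysis is sound and in fact proves something slightly stronger than~(i); but the reverse implication you are asked to establish is not true as the proposition is worded, and your outline tacitly imports the missing hypothesis.
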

Consider $V = (11222344558899)$ as an example. $V$ consists of a pong $(222)$, a single tile $(3)$ and five pairs $(11)$,  $(44)$, $(55)$, $(88)$, $(99)$. Thus $\dfncy(V)=3$.

\section{Deficiency of hybrid 14-tiles}
When a hybrid 14-tile $H$ is complete? How many tile changes are necessary to complete it if it is not? These questions can be answered by computing the deficiency number of $H$. There is one problem: the deficiency number of $H$ is defined in a recursive way and, to tell if $H$ is $k+1$, one need to know in principle all 14-tiles with deficiency $k$. It will be more convenient if we can determine, just like Proposition~\ref{lem:7},  the deficiency of $H$ by analysing its structure directly.

In this section, we present a method for determining if a 14-tile is complete, and show that the worst 14-tiles have deficiency 6 and give a characterisation for these 14-tiles, and then give a more direct method for measuring the `worseness' of a hybrid 14-tile $H$, and prove that it is exactly the deficiency of $H$.

\subsection{The best and the worst 14-tiles}

We first show how to decide if a hybrid 14-tile $H$ is complete. For convenience, we introduce the following notations.

\begin{definition}
\label{dfn:H_b}
Given a hybrid 14-tile $H$, we write $n_b(H)$, $n_c(H)$, and $n_d(H)$ for the numbers of Bamboos, Characters, and Dots in  $H$. We denote by $H_b$, $H_c$, and $H_d$ for, respectively, the subsequences of Bamboos, Characters, and Dots in $H$. 
A hybrid 14-tile $H$ is called a bcd-type 14-tile if $n_b\geq n_c\geq n_d$.
\end{definition}

For the hybrid 14-tile $H$ in Eq.~\eqref{mjsol1}, we have $H_b=(122334777)$, $H_c=(11)$, and $H_d=(456)$, where we omit the colour symbol $B,C,D$ in $H_b,H_c$ and $H_d$. Since $n_b(H)=9$, $n_c(H)=2$, $n_d(H)=3$, $H$ is not a bcd-type 14-tile. But it is easy to see that every hybrid 14-tile $T$ has a corresponding bcd-type 14-tile $S$ such that $S$ and $T$ are identical in essence.

\begin{proposition}
\label{prop:complete_14tile}
A bcd-type 14-tile $H$ is complete if and only if one of the following conditions is satisfied:
\begin{itemize}
    \item $n_b=14$, $n_c=n_d=0$ and $H$ is a complete pure 14-tile; 
    \item $n_b=12$, $n_c=2$, $n_d=0$, $H_b$ consists of four disjoint melds and $H_c$ consists of a pair;
    \item $n_b=11$, $n_c=3$, $n_d=0$, $H_b$ consists of three disjoint melds and a pair and $H_c$ consists of a meld;
    \item $n_b = 9$, $n_c = 5$, $n_d = 0$, $H_b$ consists of three disjoint melds and $H_c$ consists of a meld and a pair;
    \item $n_b=9$, $n_c=3$, $n_d=2$, $H_b$ consists of three disjoint melds, $H_c$ consists of a meld, and $H_d$ consists of a pair;
    \item $n_b=8$, $n_c=3$, $n_d=3$, $H_b$ consists of two disjoint melds and a pair and each of $H_c$ and $H_d$ consists of a meld;
    \item $n_b=8$, $n_c=6$, $n_d=0$, $H_b$ consists of two disjoint melds and a pair and $H_c$ consists of two disjoint melds; 
    \item $n_b=6$, $n_c=6$, $n_d=2$, each  of $H_b$ and $H_c$ consists of two disjoint melds and $H_d$ consists of a pair;
    \item $n_b=6$, $n_c=5$, $n_d=3$, $H_b$ consists of two disjoint melds, $H_c$ consists of a meld and a pair, and $H_d$ consists of a meld.
    \end{itemize}
\end{proposition}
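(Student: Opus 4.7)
The plan is to reduce the characterisation to an enumeration over tuples encoding the colour distribution of melds and the location of the eye.

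First, I would establish necessity. If $H$ is complete then by Definition~\ref{dfn:complete&decomposition} it decomposes into four melds and one eye. Since a meld (pong or chow) uses three tiles of a single colour and an eye uses two identical tiles (hence of a single colour), each colour $x \in \{b,c,d\}$ contributes some number $k_x \geq 0$ of melds and either carries the eye or not, encoded by $\epsilon_x \in \{0,1\}$. The counts therefore satisfy
\[
n_x = 3k_x + 2\epsilon_x, \qquad k_b + k_c + k_d = 4, \qquad \epsilon_b + \epsilon_c + \epsilon_d = 1.
\]
Conversely, if for every colour $H_x$ admits a decomposition into $k_x$ disjoint melds together with $\epsilon_x$ pair, these pieces glue into a decomposition of $H$ witnessing completeness. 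So the proof reduces to enumerating the admissible tuples $(k_b,k_c,k_d,\epsilon_b,\epsilon_c,\epsilon_d)$ subject to the bcd-type constraint $n_b \geq n_c \geq n_d$.

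Second, I would sweep this enumeration by splitting on the location of the eye. When $\epsilon_b = 1$ the ordering reads $3k_b + 2 \geq 3k_c \geq 3k_d$, which, since each $k_x$ is a nonnegative integer, simplifies to $k_b \geq k_c \geq k_d$. When $\epsilon_c = 1$ it becomes $k_b \geq k_c + 1$ together with $k_c \geq k_d$. When $\epsilon_d = 1$ it becomes $k_b \geq k_c$ together with $k_c \geq k_d + 1$. Running these three inequality systems over the fifteen ordered triples with $k_b + k_c + k_d = 4$ leaves exactly nine surviving tuples, yielding $(n_b, n_c, n_d)$ equal to $(14,0,0)$, $(11,3,0)$, $(8,6,0)$, $(8,3,3)$ for the eye in $b$; $(12,2,0)$, $(9,5,0)$, $(6,5,3)$ for the eye in $c$; and $(9,3,2)$, $(6,6,2)$ for the eye in $d$. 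These are precisely the nine bullets of the proposition.

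Third, each surviving tuple translates mechanically into the structural condition of the matching bullet: $k_x$ disjoint melds inside $H_x$, plus a disjoint pair in the colour that carries the eye. For example, $(k_b,k_c,k_d,\epsilon_b) = (2,2,0,1)$ gives $(n_b,n_c,n_d) = (8,6,0)$ with $H_b$ a pair plus two disjoint melds and $H_c$ two disjoint melds, which is the seventh bullet. The main obstacle is nothing deep: one simply has to be careful in the bcd-type sweep, since the extra ``$+2$'' contributed by an eye sitting in a smaller-count colour shifts the ordering and forces the three eye-location cases to produce three different inequality systems on $(k_b, k_c, k_d)$ rather than a single one. Once those systems are written down in the second step, the remainder is bookkeeping.
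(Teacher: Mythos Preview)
The paper states this proposition without proof, treating it as an immediate consequence of the definition of a complete 14-tile; your proposal supplies exactly the enumeration that the paper leaves implicit. Your argument is correct: since every meld and every eye is monochromatic, the colour counts must satisfy $n_x = 3k_x + 2\epsilon_x$ with $\sum k_x = 4$ and $\sum \epsilon_x = 1$, and the bcd-type ordering $n_b \geq n_c \geq n_d$ then filters the admissible tuples down to the nine listed. Your three eye-location cases and the resulting inequality systems are written out correctly, and the nine surviving $(n_b,n_c,n_d)$ triples match the bullets exactly. There is nothing to correct; you have simply made explicit what the paper regarded as routine bookkeeping.
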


Knowing how to decide if a hybrid 14-tile is complete, we next present one method for deciding the deficiency number of a hybrid 14-tile.

\begin{lemma}\label{lem:distance}
A 14-tile $H$ has deficiency $\leq k$ if and only if there exists a sequence of $s\leq k$ 14-tiles $H_1$, $H_2$, ..., $H_s$, such that $H_s$ is complete, and $H_i$ is a neighbour of $H_{i-1}$ for every $1\leq i\leq s$, where $H_0 = H$.
\end{lemma}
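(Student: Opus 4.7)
The plan is to prove both implications by induction on $k$. The intuition is that the right-hand condition—existence of a neighbour-path of length at most $k$ from $H$ to a complete 14-tile—is simply the distance from $H$ to the set of complete tiles in the 14-tile graph of Definition \ref{dfn:14tile_graph}, and Definition \ref{dfn:dfncy} was crafted precisely so that $\dfncy(H)$ equals this graph distance. So the lemma really asserts that two natural quantities (a recursively defined one and a BFS-style one) coincide.

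For the direction $\dfncy(H) \leq k \Rightarrow$ a witnessing sequence of length $\leq k$ exists, I would induct on $k$. The base case $k = 0$ is immediate from Definition \ref{dfn:dfncy}: $\dfncy(H) = 0$ means $H$ is complete, so $s = 0$ (interpreting the empty sequence as the statement that $H_0 = H$ itself is complete) witnesses the claim. For the inductive step, if $\dfncy(H) \leq k$ apply the hypothesis; otherwise $\dfncy(H) = k+1$, and by Definition \ref{dfn:dfncy} $H$ has a neighbour, which I name $H_1$, with $\dfncy(H_1) = k$. The inductive hypothesis applied to $H_1$ (viewed as the starting tile, so that $H_1$ plays the role of $H_0$ in the inner invocation) produces a sequence $H_2, \ldots, H_s$ with $s - 1 \leq k$, such that $H_s$ is complete and consecutive terms are neighbours. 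Prepending $H_0 = H$ then yields a witnessing sequence $H_1, H_2, \ldots, H_s$ of length $s \leq k+1$ for the original $H$.

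For the reverse direction, I would induct on the length $s$ of the witnessing sequence (equivalently on $k$). The base case $s = 0$ forces $H$ to be complete, so $\dfncy(H) = 0 \leq k$. For $s \geq 1$, the suffix $H_2, \ldots, H_s$ witnesses that $H_1$ reaches a complete tile in at most $s - 1$ steps, so by induction $\dfncy(H_1) \leq s - 1$. I then invoke the elementary fact, which follows immediately from the recursive Definition \ref{dfn:dfncy} and which I would record as a one-line preliminary, that if any neighbour of $H$ has deficiency at most $\ell$, then $\dfncy(H) \leq \ell + 1$: for either $H$ itself already has deficiency $\leq \ell$, or else the clause of the definition for deficiency $\ell+1$ applies. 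Taking $\ell = s - 1$ gives $\dfncy(H) \leq s \leq k$. The argument is entirely routine; the only care needed is in the bookkeeping around the $s = 0$ case and the reindexing when truncating or prepending the sequence, and I do not anticipate any genuine mathematical obstacle.
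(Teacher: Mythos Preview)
Your proposal is correct and follows essentially the same approach as the paper: induction on $k$, with the forward direction peeling off a neighbour of deficiency $k$ and prepending $H$, and the reverse direction truncating the sequence to bound $\dfncy(H_1)$ and then appealing to the recursive definition. The only cosmetic difference is that the paper runs both directions inside a single induction on $k$, whereas you separate the directions and induct on $s$ for the converse; the content is identical.
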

\begin{proof}
We prove this by using induction on $k$. When $k=0$, the result trivially holds, as a 14-tile $H$ has deficiency $\leq 0$ if and only if it is complete. In general, suppose the result holds for any $i\leq k$. We prove that it also holds for $i=k+1$. On one hand, suppose $H$ has deficiency $\leq k+1$. If $\dfncy(H) \leq k$, then by inductive hypothesis, there exists a sequence of length $\leq k$ that satisfies the desired condition. Now assume $\dfncy(H)=k+1$. By definition, there exists a neighbour $T$ of $H$ such that $\dfncy(T)=k$. By inductive hypothesis again, there exists a sequence $T_0=T$, $T_1$, ..., $T_s$ with $s\leq k$ such that  $T_s$ is complete and $T_i$ is a neighbour of $T_{i-1}$ for $1\leq i\leq s$. Apparently, $H$, $T_0$, $T_1$, ..., $T_s$ is a desired sequence of 14-tiles with length $\leq k+1$ for $H$. 

On the other hand, suppose there exists a desired sequence of $s\leq k+1$ 14-tiles $H_1$, $H_2$, ..., $H_{s}$ for $H=H_0$. If $s\leq k$, then by inductive hypothesis, $\dfncy(H)\leq k < k+1$. If $s=k+1$, then, by inductive hypothesis and that $H_2$, $H_3$, ..., $H_s$ is a desired sequence of length $k$ for $H_1$, we know $\dfncy(H_1)\leq k$. Since $H_1$ is a neighbour of $H$, we know by definition $\dfncy(H)\leq k+1$. 
\end{proof}
Since a tile change from a 14-tile $H$ leads to a neighbour of $H$, we have the following corollary. 
\begin{coro}\label{coro:1}
A 14-tile $H$ has deficiency $k$ if and only if it can be changed into a complete 14-tile by $k$ tile changes but cannot be changed into a complete 14-tile by $k-1$ or less tile changes.
\end{coro}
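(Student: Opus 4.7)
The plan is to derive Corollary~\ref{coro:1} as a direct consequence of Lemma~\ref{lem:distance} together with the key observation (already stated in the paragraph preceding the corollary) that a single tile change produces a neighbour in the 14-tile graph, and conversely every neighbour of a 14-tile $H$ can be reached from $H$ by exactly one tile change. This bijection between ``one tile change'' and ``one edge in the neighbour graph'' is what lets us translate the lemma's topological statement (existence of a short path to a complete 14-tile) into the operational statement of the corollary (existence of a short sequence of tile changes).

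First, I would unpack the definition $\dfncy(H)=k$ as the conjunction of two conditions: $\dfncy(H)\leq k$ and $\dfncy(H)\not\leq k-1$. For the forward direction, I would apply Lemma~\ref{lem:distance} to the first condition to obtain a sequence $H_0=H, H_1,\ldots, H_s$ with $s\leq k$ ending in a complete tile, with consecutive terms being neighbours. Using the observation that neighbours are exactly the 14-tiles reachable by a single tile change, this sequence becomes a sequence of at most $k$ tile changes converting $H$ into a complete 14-tile. Applying the contrapositive of Lemma~\ref{lem:distance} to the second condition ($\dfncy(H) > k-1$) rules out any such sequence of length $\leq k-1$. Hence $H$ can be completed in exactly $k$ tile changes but not fewer.

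For the reverse direction, I would run the same argument backwards. If $H$ admits a sequence of $k$ tile changes leading to a complete 14-tile, that sequence gives a chain of neighbours of length $k$, so by Lemma~\ref{lem:distance}, $\dfncy(H)\leq k$. If in addition no sequence of $k-1$ or fewer tile changes completes $H$, then by the contrapositive of Lemma~\ref{lem:distance} (again using the neighbour/tile-change equivalence), $\dfncy(H) > k-1$. Combining the two inequalities gives $\dfncy(H)=k$.

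There is essentially no obstacle here: the entire content of the corollary is the translation between the graph-theoretic language of Lemma~\ref{lem:distance} and the game-theoretic language of tile changes, and this translation is immediate from the definition of the 14-tile graph in Definition~\ref{dfn:14tile_graph} together with the fact that a tile change replaces exactly one tile. The only thing to be slightly careful about is ensuring that the ``tile change'' allowed in the game indeed ranges over every possible neighbour in $\mathcal{T}$ (i.e., any replacement of a single tile by any other valid tile), so that the two notions coincide; this is implicit in the setup of the paper and worth stating explicitly in one sentence.
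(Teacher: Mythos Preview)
Your proposal is correct and matches the paper's approach exactly: the paper offers no proof beyond the single sentence ``Since a tile change from a 14-tile $H$ leads to a neighbour of $H$, we have the following corollary,'' which is precisely the neighbour/tile-change identification you spell out, combined with Lemma~\ref{lem:distance}. Your write-up is in fact more detailed than what the paper provides.
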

We next show that every 14-tile can be completed within six tile changes.
\begin{proposition}
\label{prop:dfncy<=6}
For any 14-tile $H$, the deficiency of $H$ is not greater than 6.
\end{proposition}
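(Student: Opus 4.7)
The plan is to prove $\dfncy(H)\le 6$ by case analysis on the bcd-type colour distribution $(n_b,n_c,n_d)$ of $H$, where WLOG $n_b\ge n_c\ge n_d$. By Corollary~\ref{coro:1}, it suffices to exhibit, for every such $H$, a sequence of at most six tile changes transforming $H$ into a complete 14-tile.

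For the regime where one colour dominates, specifically $n_b\ge 11$, I would first apply $14-n_b\le 3$ changes, replacing each non-Bamboo tile by an arbitrary Bamboo, to produce a pure 14-tile $H'$ at Hamming distance $14-n_b$ from $H$. Applying Theorem~\ref{thm:pure} then completes $H'$ in at most three further changes. Together this gives $\dfncy(H)\le(14-n_b)+3\le 6$, handling all distributions with $n_b\ge 11$.

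For the remaining, more balanced distributions with $n_b\le 10$, the plan is to use Proposition~\ref{prop:complete_14tile} as a road map: its nine admissible complete distributions $(14,0,0)$, $(12,2,0)$, $(11,3,0)$, $(9,5,0)$, $(9,3,2)$, $(8,6,0)$, $(8,3,3)$, $(6,6,2)$, $(6,5,3)$ are the only possible colour distributions of the target 14-tile. For each input distribution with $n_b\le 10$, I would pick a nearest admissible target $(n_b^*,n_c^*,n_d^*)$ and construct the complete tile in two coordinated steps: (i) perform the $\tfrac12\sum_x|n_x-n_x^*|$ inter-colour changes that shift the distribution to the target, choosing each switched tile so that it simultaneously completes a pmeld already present in the receiving colour block; (ii) rearrange each resulting block $H_x^*$ of length $n_x^*\in\{2,3,5,6,8,9,11,12\}$ into the prescribed structure of disjoint melds plus possibly a pair, using short-sequence analogues of Theorem~\ref{thm:pure} obtained from the pigeonhole arguments underlying Lemmas~\ref{fact2} and~\ref{fact3} (in particular, a pure block of length $n$ always contains roughly $\lfloor(n-2)/2\rfloor$ disjoint pmelds, each completable to a meld at the cost of one tile).

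The main obstacle will be the most balanced cases, notably $(5,5,4)$, $(6,5,3)$, and $(6,6,2)$, in which no single colour block is large enough to anchor the construction and a naive summation of worst-case within-colour costs would already exceed six. For these distributions the dual-purpose use of the inter-colour changes in step (i) above is essential: each colour-switching move must serve simultaneously to adjust the distribution and to complete a pmeld, so that step (ii) reduces to a small, pre-arranged residual problem on each block. I expect the verification to proceed by a short subcase analysis for each balanced distribution, in each subcase extracting explicit melds and pmelds from the three colour blocks, pairing the switched tiles with pmelds to be completed, and checking that the aggregate cost never exceeds six.
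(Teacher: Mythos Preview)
Your approach is quite different from the paper's and considerably more elaborate than necessary. The paper never targets any particular admissible colour distribution. After noting $n_b\ge 5$, it argues directly---in three short cases $n_b\ge 8$, $6\le n_b\le 7$, and $n_b=5$, each handled in a few lines using only Lemmas~\ref{fact2} and~\ref{fact3}---that $H$ always contains two disjoint melds, or one meld together with a disjoint pmeld, or three disjoint completable pmelds. Any of these configurations immediately gives $\dfncy(H)\le 6$: with three pmelds, for instance, three changes complete them to melds, two further changes grow one of the eight remaining tiles into a fourth meld, and one last change turns another remaining tile into the eye, for a total of $1+1+1+2+1=6$.

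Your Part~1 (for $n_b\ge 11$, convert to a pure 14-tile in $14-n_b\le 3$ changes and then invoke Theorem~\ref{thm:pure}) is correct and pleasantly self-contained, though not what the paper does. Your Part~2, however, commits you to a substantial amount of bookkeeping: you would need auxiliary ``short-block'' bounds (every pure $3$-, $5$-, $6$-, $8$-, $9$-tuple can be turned into the prescribed meld/pair pattern in at most so many changes), and then, for each of the dozen or so distributions with $5\le n_b\le 10$, a verification that the inter-colour moves plus the within-block costs sum to at most six. The dual-purpose trick you describe is indeed what makes the balanced cases such as $(5,5,4)$ go through, and I believe the plan can be completed, but the paper's key observation---that three disjoint pmelds can always be read off the colour blocks directly, without ever fixing a target distribution---renders all of this machinery unnecessary.
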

\begin{proof}
Without loss of generality, we assume that $H$ is a bcd-type 14-tile, i.e., $n_b(H)\geq n_c(H) \geq n_d(H)$. Clearly, $n_b(H)\geq 5$.

To show $\dfncy(H)\leq 6$, by Corollary~\ref{coro:1}, we show that $H$ can be completed within six tile changes. To this end, we need only show that $H$ contains two disjoint melds, or a meld and a disjoint pmeld, or three disjoint pmelds such that all involved pmelds are completable.

Suppose $H_b$ has at least eight tiles, i.e., $n_b(H)\geq 8$. If $H_b$ contains a pong $(aaa)$, then by Lemma~\ref{fact2} the remaining five or more Bamboo tiles contain at least one pmeld which does not involve $a$. In this case, we have a pong and a disjoint pmeld. If $H_b$ contains no pongs but has two different pairs, by Lemma~\ref{fact2} again, the remaining four or more Bamboo tiles contain a pmeld. If $H_b$ contains no pongs and has only one pair, then by Lemma~\ref{fact3} the remaining six or more tiles contain three disjoint pmelds. In case $H_b$ contains no pairs, it is easy to see that any set of eight different bamboo tiles contains two disjoint chows.  

If $6\leq n_b(H) \leq 7$, then by $n_b(H) \geq n_c(H) \geq n_d(H)$, we know $n_c(H) \geq 4$ and thus, by Lemma~\ref{fact2}, $H_c$ contains a pmeld. By Lemma~\ref{fact3}, $H_b$ contains either a kong (hence a pong) or two disjoint pmelds. So in this case, $H$ contains a pong and a disjoint pmeld or three disjoint melds.

If $n_b(H)=5$, then we have $n_c(H)=5$ and $n_d(H)=4$. By Lemma~\ref{fact2}, $H$ contains three disjoint pmelds.

Note that a pair $(xx)$ cannot be completed into a pong if and only if two identical $x$ have been used in the other melds or pmelds; and a pchow $(xy)$ cannot be completed into a chow $(xyz)$ if and only if all four identical $z$ have been used in the other melds or pmelds. Apparently, this is not the case for any pmeld involved here. Therefore, $H$ can be completed within six tile changes.
\end{proof}
For example, the following 14-tile has deficiency 6 as it only has three disjoint pmelds, e.g., $(B1B2)$, $(C2C2)$, and $(D6D8)$.
\begin{align}\label{eq:14-tile_dfncy6}
H &= (B1B1B2B5B8)(C1C2C2C5C8)(D3D6D8D9).
\end{align}

As proved in Theorem~\ref{thm:pure}, if $H$ is pure, then $\dfncy(H) \leq 3$. Similarly, if at least nine tiles are of the same colour, then the deficiency of $H$ is smaller than 6.  
\begin{proposition}\label{prop:dfncy<=5}
Given a 14-tile $H$, if $H_b$ contains 9 tiles, then $\dfncy(H)\leq 5$.
\end{proposition}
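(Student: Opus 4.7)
My plan is to refine the structural analysis in the proof of Proposition~\ref{prop:dfncy<=6}, exploiting the extra slack from having nine Bamboo tiles. By Corollary~\ref{coro:1}, it suffices to exhibit, for each $H$ with $n_b(H)=9$, a sequence of at most five tile changes producing a complete 14-tile. I split on whether $H_b$ already contains a meld.

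Case 1: $H_b$ contains a meld $M$. I keep the target with $n_b'=9$. Removing $M$ leaves six Bamboo tiles, to which Lemma~\ref{fact3} applies: either a kong is present, yielding a second pong with three residual tiles completable to a third meld in at most two within-Bamboo changes; or two disjoint pmelds are present, each completable in one change. In parallel, the five non-Bamboo tiles form one meld plus one pair in at most three changes, via a short analysis on $(n_c,n_d)\in\{(5,0),(4,1),(3,2)\}$ using Lemma~\ref{fact2} to locate useful pmelds. This yields $2+3=5$.

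Case 2: $H_b$ contains no meld. The absence of pongs and chows forces $H_b$ to have either four pairs and one single (Subcase 2.1) or three pairs and three singles over six distinct values (Subcase 2.2). In Subcase 2.1, I change the single to match one pair's value (one within-Bamboo change forming a triple), convert two non-Bamboo tiles to Bamboos matching two further pair values (two conversions forming two more pongs), and leave the fourth pair as the target pair of the structure $n_b'=11$; the three remaining non-Bamboo tiles form the target meld in at most two within-non-Bamboo changes, totalling $1+2+2=5$. In Subcase 2.2, I shift to $n_b'=12$: one within-Bamboo change creates a triple at a pair value, and three colour conversions supply freely-chosen Bamboo values that, together with the two remaining pairs and two remaining singles, form three more melds; the final pair is produced from the two remaining non-Bamboo tiles in at most one change, totalling $1+3+1=5$.

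The main obstacle is verifying in Subcase 2.2 that the nine Bamboo positions plus three freely-chosen additions can always be partitioned into three additional melds after the single-to-pair change. I will identify three decomposition patterns and show that at least one applies in any configuration allowed by the no-chow condition: (A) two pongs from the remaining pairs plus a chow from the two singles and one added tile, applicable when the singles have gap at most two; (B) one pong plus a pair $q$ split across two chows $(q-2,q-1,q)$ and $(q,q+1,q+2)$, applicable when both singles lie in $\{q-2,q-1,q+1,q+2\}$ and $q\in\{3,\ldots,7\}$; and (C) three chows clustered through a three-consecutive-value region using three copies of a single added value, together with a pong on one remaining pair. A finite case analysis on the gap pattern among the pair values, constrained by the chow-forbidden positioning of the singles, completes the verification.
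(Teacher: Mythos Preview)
Your approach is genuinely different from the paper's. The paper never splits on whether $H_b$ contains a meld; instead it argues directly that $H_b$ (with nine tiles) always contains either two disjoint melds, or one meld and two disjoint pmelds, or four disjoint pmelds---any of which yields $\dfncy(H)\le 5$ without ever examining the non-Bamboo part in detail.

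Your Case~1 and Subcase~2.1 are essentially sound, though in Case~1 you assert that the two pmelds produced by Lemma~\ref{fact3} are each ``completable in one change'' without checking the four-copy constraint; this can fail for a particular choice of pmelds (e.g.\ if the six residual tiles contain a pong $(xxx)$ while the removed meld $M$ already uses a fourth $x$), and you need to say why a \emph{good} choice always exists.

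The real gap is Subcase~2.2. After your within-Bamboo change and three free additions, you need the six tiles $(qq)(rr)(u)(v)$ to split into three completable pmelds; that is what ``form three more melds with three free additions'' amounts to. This is not automatic, and it depends on \emph{which} pair you promote to a pong and \emph{which} single you sacrifice---something your write-up never specifies. Take $H_b=(1,1,2,4,4,5,7,7,8)$ (pairs at $1,4,7$; singles at $2,5,8$; no chow, no pong). If you choose $p=4$, $s=5$, the remaining six tiles are $(1,1,2,7,7,8)$, and there is \emph{no} partition of these six into three pmelds: the only pmelds present are $(11),(12),(77),(78)$, and any two disjoint ones leave a non-pmeld pair such as $\{2,8\}$ or $\{1,7\}$. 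So for that choice none of your patterns (A), (B), (C) applies---and pattern (C) as written (``three chows\ldots together with a pong on one remaining pair'') describes four melds, not three, so it is unclear what it is meant to do. A working choice of $(p,s)$ does exist for this $H_b$ (e.g.\ $p=1$, $s=2$ works via your pattern~(B)), but you owe the reader a rule for selecting $(p,s)$ and a proof that some pattern then applies.

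The paper's route sidesteps all of this: in exactly your Subcase~2.2 situation (no meld, three pairs $(aa)(bb)(cc)$, three singles $d,e,f$), it shows that some pair value $x\in\{a,b,c\}$ has two of the singles within distance~2, giving four disjoint pmelds (the two remaining pairs together with the pchows $(x,y)$ and $(x,z)$) directly inside $H_b$, hence $\dfncy(H)\le 5$. That observation is both the missing idea in your argument and a cleaner replacement for your Subcase~2.2.
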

\begin{proof}
We show that $H_b$ contains two disjoint melds, or one meld and two disjoint pmelds, or four disjoint pmelds. We discuss this case by case. Note that if $H_b$ contains two pongs, or two disjoint melds, or four disjoint pairs, we are done. 

Suppose $H_b$ contains only one pong $(aaa)$. By Lemma~\ref{fact3} the remaining 6 tiles contains two pmelds $(bc)$ and $(de)$. Assume $b<c$ and $d<e$. If $a$ completes both $(bc)$ and $(de)$, then we have two chows. If $a$ completes only one of $(bc)$ and $(de)$, we have a pair $(aa)$, a chow and a pmeld. If $a$ completes neither $(bc)$ nor $(de)$, then we can complete $(bc)$ and $(de)$ in two tile changes without violating the constraint that there are no five identical tiles in $\mahjong_0$. 

Suppose $H_b$ contains no pongs.  If $H_b$ contains only three different pairs, then $H_b$ has the form $(aa)(bb)(cc)$ $(d)(e)(f)$ with $a<b<c$ and $d<e<f$. 
        \begin{itemize}
            \item Suppose $e-d \leq 2$ or $f-e \leq 2$. Then we have three pairs and one pmeld. 
            \item Suppose $e-d>2$ and $f-e>2$. If $(abc)$ is a meld, then we already have at least two melds. If $(abc)$ is not a meld and $(xyz)$ is a meld, where $x,y \in \{a,b,c\}$ and $z \in \{d,e,f\}$, then we have one meld $(xyz)$, one pmeld $(xy)$ and one pair. If there is no meld in $H_b$, since $H_b$ has six different tiles, we can show that there exist $x\in \{a,b,c\}$ and $y,z\in \{d,e,f\}$ such that $(yx)$ and $(xz)$ are two pmelds. Together with two other pairs,  we also have four disjoint pmelds in $H_b$.    
        \end{itemize}

Suppose $H_b$ contains no pongs and has only two different pairs. Then $\sset(H_b)$ has seven different tiles and thus at least one chow $(def)$ ($d<e<f$). Write $(aa)$ and $(bb)$ for the two pairs in $H_b$, where $a<b$. If $a,b$ are not in $\{d,e,f\}$, then we have one meld and two different pairs. If $a,b$ are both in $\{d,e,f\}$, then $(ab)$ is a pchow. Since there is a pmeld in the other four tiles of $H_b$, we also have one chow and two pmelds in this case. If, for example, $a\in \{d,e,f\}$ but $b\not\in\{d,e,f\}$, then we have one chow $(def)$, one pair $(bb)$, and another pmeld in the remaining four tiles of $H_b$. 

Suppose $H_b$ has only one pair. Then, since $\sset(H_b)$ has 8 different tiles, there are two disjoint chows in $H_b$.

This shows that $\dfncy(H) \leq 5$ if $H_b$ contains 9 tiles. 
\end{proof}
Next, we give a characterisation of the worst 14-tiles. To this end, we need the following notion of worst pure $k$-tiles for $1\leq k \leq 8$: 
\begin{definition}
Let $k$ be an integer between 1 and 8. A \emph{pure $k$-tile} $V$ is a sequence  of integers in $[0,9]$ such that no integer appears more than four times and $V[i] \leq V[i+1]$ for any $0\leq i<k-1$.  

We say a pure $k$-tile $V$ is a worst $k$-tile if 
\begin{itemize}
    \item when $1\leq k\leq 3$, $V$ contains no pmelds; 
    \item when $k=4$ or $k=5$, $V$ contains no meld and contains only one pmeld;
    \item when $k=6$, $V$ contains at most one meld or up to two disjoint pmelds;
    \item when $k=7$, $V$ contains up to two disjoint pmelds;
    \item when $k=8$, $V$ contains at most one meld and a disjoint pmeld, or up to three disjoint pmelds.
\end{itemize}
\end{definition}

\begin{proposition}\label{prop:dfncy=6}
For any 14-tile $H$, $\dfncy(H)=6$ if and only if one of the following situation occurs:
\begin{itemize}
    \item [1.] 
    $n_b(H) = 8$, $n_d(H) = n_c(H) = 3$, $H_c$ and $H_d$ contain no pmeld and $H_b$ is a worst pure 8-tile.
    
    \item [2.] $n_b(H) = 7$,  $n_c(H) =5$, $n_d(H)=2$, $H_b$ is a worst pure 7-tile and $H_c$ a worst pure 5-tile and $H_d$ has no pmeld.
    \item [3.] $n_b(H) = 6$,  $n_c(H) =5$, $n_d(H)=3$, $H_b$ is a worst pure 6-tile and $H_c$ a worst pure 5-tile, and $H_d$ has no pmeld.
   \item [4.] $n_b(H) = n_c(H) =5$, $n_d(H)=4$, $H_b$ and $H_c$ are worst pure 5-tiles and $H_d$ a worst pure 4-tile. 
\end{itemize} 
\end{proposition}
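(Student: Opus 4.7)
The plan is to convert Definition~\ref{dfn:dfncy} into a retention problem using Corollary~\ref{coro:1}: we have $\dfncy(H)=14-\max_T|T\cap H|$ where $T$ ranges over complete 14-tiles and $|\cdot\cap\cdot|$ denotes multiset intersection. Thus establishing $\dfncy(H)=6$ reduces to showing that the largest sub-multiset of $H$ extending to some complete 14-tile has size exactly 8. Propositions~\ref{prop:dfncy<=6} and \ref{prop:dfncy<=5} already give $\dfncy(H)\leq 6$ always and $\dfncy(H)\leq 5$ whenever any colour carries at least 9 tiles; together with the bcd-type assumption this pins the analysis to the twelve compositions $(n_b,n_c,n_d)$ with $5\leq n_b\leq 8$ and $n_b\geq n_c\geq n_d$, of which only four appear in the statement.

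For the forward direction I would show $\dfncy(H)\geq 6$ in each listed case by fixing an arbitrary complete 14-tile $T$ and bounding $|T\cap H|$ colour-by-colour. Let $m_b,m_c,m_d$ count the melds of $T$ in each colour, with $m_b+m_c+m_d=4$, and let $p_x\in\{0,1\}$ record which colour carries the pair. Each per-colour contribution $|T_x\cap H_x|$ is controlled by the number of disjoint completable melds and pmelds of $H_x$ that align with blocks of $T_x$, plus at most one singleton tile retained per remaining block of $T_x$. The structural hypotheses---``no pmeld'' in $H_d$ for Cases 1--3 and also in $H_c$ for Case~1, together with the worst pure $k$-tile conditions on the remaining parts---are calibrated so that, however the four meld slots and one pair slot of $T$ are distributed across the colours, the per-colour bounds sum to at most 8. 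Combined with $\dfncy(H)\leq 6$ this gives $\dfncy(H)=6$.

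For the converse I would walk through the eight compositions outside the list and, for each, exhibit a partial decomposition of $H$ retaining at least nine tiles. The workhorses are Lemmas~\ref{fact2} and \ref{fact3}: once a colour holds six tiles it supplies two disjoint pmelds or a kong, and once it holds four tiles it supplies a pmeld, so combined with the structure forced on $H_b$ (again by Lemma~\ref{fact3}) we reach retention nine. Inside each of the four listed compositions, I would also show that weakening any one structural hypothesis---violating a worst-$k$-tile condition, or permitting a pmeld where none was allowed---injects an additional disjoint meld or pmeld that similarly lifts retention to nine, thereby pinning down the ``worst'' labels. The hard part will be the bookkeeping across all subcases, especially ensuring the pmelds chosen are completable in the sense of the closing paragraph of the proof of Proposition~\ref{prop:dfncy<=6}: a five-identical-tile collision or a missing chow completion can silently break an otherwise sharp count. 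A clean formulation of the per-colour retention bound as a function of the pair $(m_x,p_x)$ and of the disjoint meld/pmeld count of $H_x$ should unify both directions and make the calibration of the worst-$k$-tile definitions transparent.
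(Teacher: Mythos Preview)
The paper states Proposition~\ref{prop:dfncy=6} without proof, so there is no argument against which to benchmark yours. What follows is an assessment of your outline on its own terms.

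Your retention reformulation $\dfncy(H)=14-\max_{T\ \text{complete}}|T\cap H|$ is correct and is essentially a repackaging of Theorem~\ref{thm:m-cost}: a p-decomposition of cost $c$ is exactly a choice of $14-c$ tiles in $H$ that sit inside the block structure of some complete $T$. One technical point you glide over is that $\dist(H,T)=14-|H\cap T|$ requires every intermediate 14-tile along a swap sequence to remain valid (no five identical tiles); this does hold, because any tile in $T\setminus H$ appears in the current hybrid at most as often as in $T$, but it deserves a one-line check rather than being absorbed into Corollary~\ref{coro:1}.

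The high-level two-direction plan is the right one, and your enumeration of the twelve relevant compositions is accurate. The substantive content, however, is entirely in the per-colour bounds you promise but do not state. For the forward direction you need, for each worst pure $k$-tile $V$ and each pair $(m,p)$ with $m\geq 0$ melds and $p\in\{0,1\}$ pairs, an explicit upper bound on $|V\cap W|$ over all $W$ consisting of $m$ disjoint melds and $p$ pairs; the four cases in the statement are exactly the compositions for which these per-colour bounds sum to $8$ for \emph{every} distribution $(m_b,m_c,m_d,p_b,p_c,p_d)$. The ``no pmeld'' hypothesis on a colour of size $\leq 3$ forces retention at most $m+p$ from that colour (one tile per block), and the worst-$k$ conditions cap the others; until you write those bounds down and tabulate the sums, the proof is a plan rather than an argument. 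Likewise, for the converse you must not only handle the eight excluded compositions but, within each of the four listed compositions, show that \emph{each} structural hypothesis is individually necessary---Lemmas~\ref{fact2} and~\ref{fact3} will do most of this, but the $k=6,7,8$ worst-tile definitions are disjunctive (``at most one meld \emph{or} up to two disjoint pmelds''), so you will need separate subcases for which disjunct fails.

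In short: the architecture is sound and matches the spirit of the surrounding propositions, but the proof as written is a sketch; the case analysis you defer is where all the difficulty lies, and the completability caveat you flag at the end is real and will need to be discharged in several of those subcases.
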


\subsection{A direct method for measuring the worseness of a 14-tile}

While in principle we can determine if the deficiency of a hybrid 14-tile is not large than $k$  by recursively checking if it has a neighbour with deficiency smaller than $k$, this is far from efficient. In this subsection, we provide a direct method for measuring the worseness of a 14-tile.

The notion of decomposition may also be extended to  incomplete 14-tiles.
\begin{definition}
Given any 14-tile $T$, a \emph{pseudo-decomposition} (\emph{p-decomposition} for short) of $T$ is a sequence $\pi$ of five subsequences, $\pi(1),...,\pi(5)$, of $T$ such that 
\begin{itemize}
    \item $\pi(5)$ is a pair, a single tile, or empty;
    \item for $1\leq i\leq 4$, each $\pi(i)$ is a meld, a pmeld, a single tile, or empty.
\end{itemize}
We write $\pi(0)\equiv T \setminus \bigcup_{i=1}^5 \pi(i)$  for the sequence of remaining tiles of $T$.
\end{definition}

\begin{definition}
\label{dfn:normal_dcmp}
A p-decomposition $\pi$ of a 14-tile $T$ is \emph{completable} if there exists a decomposition $\pi^*$ of a complete 14-tile $S$  such that each $\pi(i)$ is a subsequence of $\pi^*(i)$. If this is the case, we call $S$ a \emph{completion} of $T$ based upon $\pi$. 

The \emph{cost} of a completable p-decomposition $\pi$, written $\cost_T(\pi)$,  is the number of necessary tile changes to complete $\pi(5)$ into a pair and complete each $\pi(i)$ into a meld for $1\leq i\leq 4$.

A completable p-decomposition $\pi$ is \emph{saturated} if no tiles in $\pi(0)$ can be added into any $\pi(i)$ to form another completable p-decomposition.  
A p-decomposition $\pi$ of $T$ is called \emph{normal} if $\pi$ is saturated and has a completion $S$ which is a closest complete 14-tile of $T$.
\end{definition}

There may exist many different p-decompositions of $T$. If a p-decomposition $\pi$ is incompletable, we write $\cost_T(\pi)=\infty$. In the following, we show that $\dfncy(T)$ is identical to the minimal cost among all p-decompositions of $T$.

The following results will be useful, where we say a p-decomposition $\pi_1$ \emph{refines} another p-decomposition $\pi_2$ if $\pi_1(i)\supseteq \pi_2(i)$ for any $1\leq i\leq 5$.
\begin{lemma}\label{lem:saturated}
Suppose $\pi$ is a p-decomposition of a 14-tile $T$.
If $\pi$ is completable but not saturated, then there exists a saturated completable p-decomposition $\pi^*$ such that $\pi^*$ refines $\pi$ and $\cost_T(\pi^*) < \cost_T(\pi)$.
\end{lemma}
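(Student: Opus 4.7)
The plan is to iteratively move tiles from $\pi(0)$ into the slots $\pi(1),\ldots,\pi(5)$, showing that each such move strictly reduces the cost while preserving completability, until the process terminates in a saturated refinement of $\pi$. The first step is to establish that for any completable p-decomposition $\sigma$ we have the identity $\cost_T(\sigma) = |\sigma(0)|$. This follows from a direct count: completing $\sigma(5)$ into a pair requires $2 - |\sigma(5)|$ tile changes and completing each $\sigma(i)$ with $1 \leq i \leq 4$ into a meld requires $3 - |\sigma(i)|$ tile changes, so
\[
\cost_T(\sigma) \;=\; \sum_{i=1}^{4}\bigl(3-|\sigma(i)|\bigr) + \bigl(2-|\sigma(5)|\bigr) \;=\; 14 - \sum_{i=1}^{5}|\sigma(i)| \;=\; |\sigma(0)|.
\]

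Next, I would unfold the definition of ``not saturated'': by Definition~\ref{dfn:normal_dcmp}, there exist a tile $t \in \pi(0)$ and an index $i \in \{1,\ldots,5\}$ such that the sequence $\pi'$ obtained from $\pi$ by removing $t$ from $\pi(0)$ and inserting it into $\pi(i)$ is again a completable p-decomposition; in particular $\pi'(i)$ still has an admissible form (empty, single tile, pmeld, or meld for $i \leq 4$; empty, single tile, or pair for $i=5$). By construction $\pi'(j) \supseteq \pi(j)$ for every $1 \leq j \leq 5$, so $\pi'$ refines $\pi$, and $|\pi'(0)| = |\pi(0)| - 1$. The identity above then yields $\cost_T(\pi') = \cost_T(\pi) - 1$.

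The argument is concluded by iteration. Setting $\pi_0 \equiv \pi$, as long as $\pi_k$ is not saturated I apply the above construction to obtain $\pi_{k+1}$ refining $\pi_k$ with $\cost_T(\pi_{k+1}) = \cost_T(\pi_k) - 1$. Since the cost is a non-negative integer, this process must terminate at some step $K \geq 1$ with a saturated p-decomposition $\pi^* \equiv \pi_K$. By transitivity of refinement, $\pi^*$ refines $\pi$, and $\cost_T(\pi^*) = \cost_T(\pi) - K < \cost_T(\pi)$, as required. I do not foresee a serious obstacle: the only step needing care is the cost identity $\cost_T(\sigma) = |\sigma(0)|$, after which the lemma reduces to a well-founded induction on $|\sigma(0)|$ driven directly by the definition of saturation.
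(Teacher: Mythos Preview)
Your proposal is correct and follows essentially the same approach as the paper's proof: move a tile from $\pi(0)$ into some $\pi(i)$ using the definition of non-saturation, observe the cost drops by one, and iterate until saturated. Your explicit derivation of the identity $\cost_T(\sigma)=|\sigma(0)|$ makes precise what the paper asserts with ``Clearly, $\cost_T(\pi^*)=\cost_T(\pi)-1$,'' but the underlying argument is the same.
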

\begin{proof}
Suppose $\pi$ itself is not saturated. By definition, there exist $t\in \pi(0)$ and $1\leq i\leq 5$ such that $\pi(i)$ 
is not a meld (a pair) when $i\leq 4$ ($i=5$) and  adding $t$ to $\pi(i)$ results a new completable p-decomposition $\pi^*$. Clearly, $\cost_T(\pi^*)=\cost_T(\pi)-1$. If $\pi^*$ is not saturated, then we can further refine it till it is saturated. The conclusion then follows immediately. 
\end{proof}

The above lemma shows that we need only consider saturated completable p-decompositions. The following result states that the cost of a p-decomposition of $T$ is identical to the distance between $T$ and any completion of $T$ based upon $\pi$. 

\begin{lemma}
\label{lem:dcmp-dist}
Suppose $\pi$ is a saturated completable p-decomposition of a 14-tile $T$. Then, for any completion $S$ of $T$ based upon $\pi$, we have $\dist(S,T)=\cost_T(\pi)$.
\end{lemma}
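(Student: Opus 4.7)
The plan is to show $\dist(S,T) = \cost_T(\pi) = |\pi(0)|$ by relating the graph distance to the multiset symmetric difference between $T$ and $S$.

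First I would observe that $\cost_T(\pi) = |\pi(0)|$: completing each $\pi(i)$ into a meld requires $3 - |\pi(i)|$ additions for $1 \leq i \leq 4$, and completing $\pi(5)$ into a pair requires $2 - |\pi(5)|$ additions, totaling $14 - \sum_{i=1}^5 |\pi(i)| = |\pi(0)|$. Writing $X = \bigcup_{i=1}^5 \pi(i)$ for the multiset of tiles already used in $\pi$, and $Z = \bigcup_{i=1}^5 (\pi^*(i) \setminus \pi(i))$ for the multiset of tiles added in the completion, I have $T = X \cup \pi(0)$ and $S = X \cup Z$ as multisets, so $|Z| = |\pi(0)| = \cost_T(\pi)$ since $|S| = |T| = 14$.

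The key step is to invoke saturation to show $\pi(0)$ and $Z$ share no tile type. If some $t \in \pi(0)$ had the same type as a tile added in $\pi^*(i) \setminus \pi(i)$, then moving $t$ from $\pi(0)$ into $\pi(i)$ yields $\pi'(i) = \pi(i) \cup \{t\} \subseteq \pi^*(i)$. Since $\pi^*(i)$ is a meld (or a pair when $i=5$), any subsequence of it is still a meld, pmeld, single tile, pair, or empty piece, so $\pi'$ is a valid p-decomposition, and it is completable by the same $\pi^*$, contradicting saturation of $\pi$. Consequently $|T \triangle S| = |\pi(0)| + |Z| = 2\cost_T(\pi)$, where $\triangle$ denotes multiset symmetric difference.

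The lower bound $\dist(S,T) \geq \cost_T(\pi)$ follows from a counting argument: a single tile change alters exactly two tile counts by $\pm 1$, so it changes $|T' \triangle S|$ by at most $2$, and therefore any path in the 14-tile graph from $T$ to $S$ needs at least $|T \triangle S|/2$ steps. For the upper bound, I will pair up the tiles in $\pi(0)$ bijectively with the tiles in $Z$ and perform the induced swaps one at a time. Because $\pi(0)$ and $Z$ are type-disjoint, each tile type's count moves monotonically from its $T$-value to its $S$-value (never exceeding $4$), so every intermediate configuration is a valid 14-tile, producing a path of length exactly $\cost_T(\pi)$.

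The main obstacle is the saturation-based disjointness claim in the key step, which requires carefully verifying that $\pi(i) \cup \{t\}$ remains a valid piece after augmentation; this hinges on the fact that subsequences of a meld or a pair always fall into the list of allowed pieces (meld, pmeld, single tile, pair, or empty). Once this disjointness is established, the rest is routine bookkeeping about multiset differences and an explicit path construction.
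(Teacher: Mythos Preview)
Your proposal is correct and follows essentially the same idea as the paper's proof: saturation guarantees that no tile in $\pi(0)$ can fill any of the $\cost_T(\pi)$ ``holes'' in $\pi$, so each hole corresponds to a genuine tile change. Your version is considerably more rigorous than the paper's three-line argument---you make the key disjointness of $\pi(0)$ and $Z$ explicit, derive the lower bound cleanly via the multiset symmetric difference, and verify that the obvious swap path stays inside the set of valid 14-tiles---whereas the paper leaves these details implicit.
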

\begin{proof}
Note that any completion $S$ is obtained by filling the `holes' in $\pi$. Since $\pi$ is saturated, these holes cannot be filled by using tiles in $\pi(0)$. Thus, each hole must be filled by a tile change. Therefore, $S$ has distance $\cost_T(\pi)$ with $T$.
\end{proof}

The following result shows that, very often, a saturated p-decomposition has no empty subsequences.

\begin{proposition}\label{prop:normal}
Suppose $T$ is a 14-tile and $\pi$ is a completable p-decomposition which has an empty subsequence. If $\pi(0)$ contains 4 or more tiles or $\pi(5)\not=\varnothing$, then there exists another completable p-decomposition $\pi^*$ which has no empty subsequences and $\cost_T(\pi^*) < \cost_T(\pi)$. 
\end{proposition}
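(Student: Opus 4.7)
The starting observation is that $\cost_T(\pi)=|\pi(0)|$. Completing $\pi(i)$ into a meld for $i\le 4$ requires $3-|\pi(i)|$ new tiles, and completing $\pi(5)$ into a pair requires $2-|\pi(5)|$ new tiles; summing gives $\cost_T(\pi)=14-\sum_{i=1}^5|\pi(i)|=|\pi(0)|$. Consequently, any modification of $\pi$ that transfers a tile out of $\pi(0)$ into an empty slot reduces the cost by exactly one, while simultaneously filling that slot.

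Next I count empty slots. Let $e_{\le 4}$ denote the number of empty $\pi(i)$ for $i\le 4$, and set $e_5=1$ if $\pi(5)=\varnothing$ and $0$ otherwise. Since $|\pi(i)|\le 3$ for $i\le 4$ and $|\pi(5)|\le 2$,
\[
|\pi(0)| \;=\; 14-\sum_{i=1}^5|\pi(i)| \;\ge\; 14 - 3(4-e_{\le 4}) - 2(1-e_5) \;=\; 3e_{\le 4}+2e_5.
\]
Under the hypothesis $\pi(5)\ne\varnothing$, the existence of an empty slot forces $e_{\le 4}\ge 1$, so $|\pi(0)|\ge 3$; under the hypothesis $|\pi(0)|\ge 4$ the bound is automatic. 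In either case $|\pi(0)|\ge e_{\le 4}+e_5$, so $\pi(0)$ has enough tiles to supply one per empty slot.

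I then define $\pi^*$ by picking, for each empty $\pi(j)$, a tile $t_j\in\pi(0)$ and setting $\pi^*(j):=(t_j)$; for non-empty indices, $\pi^*(i):=\pi(i)$. The result has no empty subsequences, and $|\pi^*(0)|=|\pi(0)|-(e_{\le 4}+e_5)<|\pi(0)|$, so $\cost_T(\pi^*)<\cost_T(\pi)$.

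The main obstacle is verifying that $\pi^*$ is still completable. Fix any completion of $\pi$, whose decomposition I denote $\pi^\circ$; the meld $\pi^\circ(j)$ filling each originally empty $\pi(j)$ consists entirely of tiles outside $\bigcup_{i=1}^5\pi(i)$. To build a completion of $\pi^*$, I must exhibit, for each $j$, a meld extending $(t_j)$ whose two additional tiles are compatible with the unchanged $\pi^\circ(i)$ for $i\ne j$—that is, no tile used in the overall complete $14$-tile exceeds the four-copy limit. The strict slack in $|\pi(0)|\ge 3e_{\le 4}+2e_5$ leaves multiple candidates for each $t_j$, and given the abundance of tiles in Mahjong-0 one can always close $(t_j)$ either to a chow through $t_j$ or to a pong on $t_j$ without conflict; choosing the $t_j$'s greedily from colours or values that are least saturated in $\bigcup_{i\ne j}\pi^\circ(i)$ resolves the few borderline configurations. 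This completability verification is the only delicate step; the cost decrement and the elimination of empty slots follow directly from the counting above.
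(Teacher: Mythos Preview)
Your counting argument is correct and clean: $\cost_T(\pi)=|\pi(0)|$, and $|\pi(0)|\ge 3e_{\le 4}+2e_5$, so there are always enough leftover tiles to drop one into each empty slot. The problem is the last paragraph, where you assert completability of the resulting $\pi^*$. That is not a technicality; it is the entire content of the proposition, and your argument there is a hand-wave (``abundance of tiles'', ``greedy choice resolves the few borderline configurations'').

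Here is why the gap is genuine. By Lemma~\ref{lem:saturated} you may assume $\pi$ is saturated. But \emph{saturated} means precisely that no tile of $\pi(0)$ can be added to any $\pi(i)$---including an empty one---while keeping the p-decomposition completable. So for a saturated $\pi$ with an empty slot, your construction $\pi^*$ is \emph{never} completable, no matter which $t_j\in\pi(0)$ you pick. Concretely, take $T$ containing $B1^4\,B2^3\,B3^4$ together with three further tiles, with $\pi(1),\pi(2),\pi(3),\pi(5)$ using up $(B1B1\,B2B2\,B3B3B3B3)$ and $\pi(4)=\varnothing$, $\pi(0)\supseteq(B1\,B1\,B2)$. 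Placing $t_4=B1$ into slot~4 cannot be completed: the pong $(B1B1B1)$ would force five copies of $B1$, and every chow through $B1$ uses $B3$, already at four copies. The same obstruction blocks $t_4=B2$. There is no greedy choice that works here; the only way forward is to discard $\pi$ entirely and build a different p-decomposition such as $(B1B1B1)(B2B2B2)(B3B3B3)(B1B3)(\cdot)$.

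This is exactly what the paper does. Its proof first reduces to the saturated case, then shows that saturation with an empty slot forces $T$ to contain many pongs and kongs in a very rigid pattern; a case analysis on the tiles of $\pi(0)$ either reaches a contradiction (so that configuration cannot occur) or exhibits an explicit \emph{restructured} p-decomposition $\hat{\pi}$ with smaller cost. The key difference from your attempt is that $\hat{\pi}$ is not obtained by inserting tiles into the empty slots of $\pi$; it is built from scratch. Your counting observations would still be useful as a preamble, but the substantive work---the structural analysis of what saturation forces on $T$---is missing.
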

\begin{proof}
 See \ref{sec:appendix2}.
\end{proof}
As a corollary, we have 
\begin{coro}\label{coro:emptyss}
Suppose $T$ is a 14-tile and $\pi$ a completable p-decomposition with $n\geq 2$ empty subsequences. There exists a completable p-decomposition $\pi^*$ s.t. $\cost_T(\pi^*) \leq\cost_T(\pi)-n$.
\end{coro}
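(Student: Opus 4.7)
The plan is to apply Proposition~\ref{prop:normal} and then extract a quantitative bound on the cost reduction from its construction. The central tool is the identity $\cost_T(\pi)=|\pi(0)|$ for any completable p-decomposition $\pi$: the total number of ``holes'' in the slots is $\sum_{i=1}^{4}(3-|\pi(i)|)+(2-|\pi(5)|)=14-\sum_{i=1}^{5}|\pi(i)|=|\pi(0)|$, and each hole costs exactly one tile change, so the cost equals the number of leftover tiles.

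I would first verify the hypothesis of Proposition~\ref{prop:normal} by a simple tile-count. If $\pi(5)\neq\emptyset$, the second alternative in that hypothesis holds directly. Otherwise $\pi(5)=\emptyset$ is one of the $n$ empty subsequences, so at most $5-n$ of $\pi(1),\ldots,\pi(4)$ are non-empty, contributing at most $3(5-n)$ tiles, and hence $|\pi(0)|\geq 14-3(5-n)=3n-1\geq 5\geq 4$. Either way, Proposition~\ref{prop:normal} yields a completable p-decomposition $\pi^*$ with no empty subsequences and $\cost_T(\pi^*)<\cost_T(\pi)$.

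Finally I would upgrade the strict inequality to a decrease of at least $n$ by inspecting the construction behind Proposition~\ref{prop:normal}: that construction transfers tiles from $\pi(0)$ into the empty slots, so $\pi(i)\subseteq\pi^*(i)$ for every $i$. Since each of the $n$ previously empty slots of $\pi$ must gain at least one tile in $\pi^*$, we obtain $\sum_{i=1}^{5}|\pi^*(i)|\geq\sum_{i=1}^{5}|\pi(i)|+n$, hence $|\pi^*(0)|\leq|\pi(0)|-n$, and the cost identity yields $\cost_T(\pi^*)\leq\cost_T(\pi)-n$. The main obstacle is precisely this last quantitative step: Proposition~\ref{prop:normal}'s statement only guarantees a strict decrease, so if its appendix proof does not explicitly exhibit a refinement of $\pi$, I would instead construct $\pi^*$ directly by moving $n$ distinct tiles of $\pi(0)$ (which exist since $|\pi(0)|\geq 3n-1\geq n$) into the $n$ empty slots one at a time, verifying that completability is preserved at each step by reusing the completion witnessing $\pi$'s completability.
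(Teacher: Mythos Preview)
Your cost identity $\cost_T(\pi)=|\pi(0)|$ is correct and the tile count showing $|\pi(0)|\geq 3n-1\geq 4$ is fine, but the main argument breaks at the refinement step. Proposition~\ref{prop:normal} does \emph{not} in general produce a $\pi^*$ with $\pi(i)\subseteq\pi^*(i)$: in several subcases of its appendix proof (e.g.\ Case~4) the original $\pi$ is discarded entirely and a brand-new decomposition such as $\hat{\pi}=(111)(222)(333)(13)(x)$ is written down. Without refinement your inequality $\sum_i|\pi^*(i)|\geq\sum_i|\pi(i)|+n$ is unsupported. Your fallback of moving $n$ tiles from $\pi(0)$ into the empty slots ``reusing the completion witnessing $\pi$'s completability'' also fails: after passing to a saturated $\pi$ via Lemma~\ref{lem:saturated} (which you need anyway, else tiles of $\pi(0)$ might duplicate tiles already in the completion), the very definition of saturated says that \emph{no} tile of $\pi(0)$ can be inserted into any $\pi(i)$ while keeping the result completable --- so the move is blocked precisely when it is needed.

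The paper's argument is different: rather than invoking the full conclusion of Proposition~\ref{prop:normal} once, it works in stages. For $n=2$ with $\pi(5)=\varnothing$ it extracts from the \emph{proof} of Proposition~\ref{prop:normal} the specific fact that $\pi(0)$ contains some tile $t$ with at most three copies in $T$; it then refines $\pi$ by setting $\pi(5)=(tt)$ or $\pi(5)=(t)$ (a genuine one- or two-unit drop in cost), and only afterwards applies Proposition~\ref{prop:normal} to the resulting decomposition --- which now has $\pi(5)\neq\varnothing$ --- to obtain the remaining unit of decrease. The general $n$ is handled analogously by iteration.
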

\begin{proof}
If $\pi(5)=\varnothing$, then $\pi(0)$ contains at least $3n-1\geq 5$ tiles. We take the case when $n=2$ as an example. The other cases are analogous. By the proof of Proposition~\ref{prop:normal}, there exists a tile $t\in\pi(0)$ such that $T$ contains at most three identical tiles $t$ and thus we could refine $\pi$ by letting $\pi(5)=(tt)$ if $(tt)$ is contained in $T$ and letting $\pi(5)=(t)$ otherwise. If the first case holds, we already have the desired p-decomposition. In the second case, we have four tiles left in $\pi(0)$ and can apply Proposition~\ref{prop:normal} to obtain a new completable p-decomposition with a smaller cost. This also gives us the desired p-decomposition. 
\end{proof}

\begin{example}
\label{ex:T9}
Consider the 14-tile 
\begin{align}\label{eq:T9}
T &=(B1B1B2B2B2B2B3B3)(C1C2C8)(D2D2D8)
\end{align}
and its p-decompositions 
\begin{align}\label{eq:D(T9)0}
\pi_0 &= (B1B2B3)(B2B2B2)(B1B3)(D2D2)(D8) \\
\label{eq:D(T9)1}
\pi_1 &= (B1B2B3)(B1B2B3)(B3)(D2D2)(D8)\\
\label{eq:D(T9)2}
\pi_2 &= (B1B2B3)(B1B2B3)(C1C2)(D2D2)(B2B2).
\end{align}
The pmeld $(B1B3)$ in $\pi_0$ cannot be completed, as there are already four $B2$ in $\pi_0$. Both $\pi_1$ and $\pi_2$ are saturated and completable. For example, $\pi_1$ can be completed into $$(B1B2B3)(B1B2B3)(B3\underline{B4}\underline{B5})(D2D2\underline{D2})(D8\underline{D8})$$ and $\pi_2$ can 
be completed into $$(B1B2B3)(B1B2B3)(C1C2\underline{C3})(D2D2\underline{D2})(B2B2).$$ 
We have $\cost_T(\pi_1)=4$ and $\cost_T(\pi_2)=2$. Since $\dfncy(T)=2$, $\pi_2$ is a p-decomposition with the minimal cost.  
\end{example}

\begin{theorem}\label{thm:m-cost}
For any 14-tile $T$, the minimum cost among all p-decompositions of $T$ is the deficiency of $T$, i.e., \begin{align}
    \dfncy(T) = \min\{\cost_T(\pi) \mid \mbox{$\pi$ is a p-decomposition of $T$}\} .
\end{align} 
\end{theorem}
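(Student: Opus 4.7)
The plan is to prove $\dfncy(T) = m(T)$, where $m(T)$ denotes the minimum on the right, by establishing each inequality separately and using Lemma~\ref{lem:distance}, Lemma~\ref{lem:saturated}, and Lemma~\ref{lem:dcmp-dist} as the main bridges.

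For $\dfncy(T) \leq m(T)$, I would choose a minimum-cost p-decomposition $\pi$. If $\pi$ were not saturated, Lemma~\ref{lem:saturated} would produce a saturated refinement of strictly smaller cost, contradicting minimality, so $\pi$ is saturated. Fixing any completion $S$ of $T$ based on $\pi$, Lemma~\ref{lem:dcmp-dist} gives $\dist(T,S) = \cost_T(\pi) = m(T)$, and since $S$ is complete, Lemma~\ref{lem:distance} yields $\dfncy(T) \leq \dist(T,S) = m(T)$.

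For the reverse inequality I would build an explicit p-decomposition of $T$ of cost at most $\dfncy(T)$. Set $k = \dfncy(T)$. By Lemma~\ref{lem:distance}, there exists a sequence $T = T_0, T_1, \ldots, T_s$ of length $s \leq k$ in the 14-tile graph with $T_s$ complete and each $T_i$ a neighbour of $T_{i-1}$. Fix a decomposition $\pi^*$ of $T_s$ into four melds and one eye. Define a p-decomposition $\pi$ of $T$ positionally: for every $i \in [0,13]$ with $T[i] = T_s[i]$, place the tile $T[i]$ into the same subsequence $\pi(j)$ to which position $i$ of $T_s$ is assigned under $\pi^*$; route the remaining tiles of $T$ into $\pi(0)$.

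Three checks then close the argument. First, $\pi$ is a legitimate p-decomposition: any sub-multiset of a chow or pong is a meld, a pmeld, a single tile, or empty (indeed, any two tiles of a chow form a pchow because their values differ by at most $2$, and any two tiles of a pong form a pair), while any sub-multiset of an eye is a pair, a single tile, or empty. Second, $\pi$ is completable with $T_s$ and $\pi^*$ as witness, since by construction each $\pi(j)$ is a sub-multiset of $\pi^*(j)$. Third, a direct count gives $\cost_T(\pi) = 14 - \sum_{j=1}^{5}|\pi(j)| = |\pi(0)|$, and $|\pi(0)|$ equals the number of positions at which $T$ and $T_s$ disagree, which is at most $s \leq k$ because each edge of the path flips exactly one coordinate. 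Hence $m(T) \leq \cost_T(\pi) \leq \dfncy(T)$, which combines with the first inequality to give the theorem. The main obstacle I anticipate is the structural check in the second direction: confirming that the positional restriction of $\pi^*$ genuinely satisfies the shape constraints on each piece and preserves completability. Once that is secured, the identity $\cost_T(\pi) = |\pi(0)| \leq s \leq \dfncy(T)$ is the core inequality that pins p-decomposition cost to graph distance.
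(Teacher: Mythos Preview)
Your proposal is correct and follows essentially the same approach as the paper: restrict a decomposition of a closest complete 14-tile to $T$ to exhibit a p-decomposition of cost $\dfncy(T)$, and combine Lemmas~\ref{lem:saturated} and~\ref{lem:dcmp-dist} to show that no p-decomposition can have smaller cost. Your positional bookkeeping and the three checks spell out what the paper states tersely (and you cite Lemma~\ref{lem:distance} where the paper invokes Corollary~\ref{coro:1}), but the structure and key lemmas are identical.
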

\begin{proof}
Let $k=\dfncy(T)$. We show there exists a p-decomposition $\pi$ of $T$ with $\cost_T(\pi)=k$. By Corollary~\ref{coro:1}, the distance of any complete 14-tile to $T$ is not less than $k$ and there exists a complete 14-tile $S$ that has  distance $k$ with $T$. Let $\pi^*$ be a decomposition of $S$ and let $\pi$ be the restriction of $\pi^*$ to $T$, that is, all tiles not in $T$ are removed from $\pi^*$. There are exactly $k$ tiles removed from $\pi^*$ and thus, by definition, we have $k=\cost_T(\pi)$. 

On the  other hand, for any p-decomposition $\pi$, if it is incompletable, then $\cost_T(\pi)=\infty$; and if it is completable, then by Lemma~\ref{lem:saturated} there exists a saturated p-decomposition $\pi'$ of $T$ such that $\cost_T(\pi')\leq \cost_T(\pi)$.   
Assume $\pi$ is a saturated p-decomposition $\pi$ of $T$.  By Lemma~\ref{lem:dcmp-dist}, there exists a completion $S$ of $T$ based upon $\pi$ such that 
 the distance between $T$ and $S$ is the same as $\cost_T(\pi)$. This shows that $k\leq \cost_T(\pi)$ for any saturated p-decomposition $\pi$ of $T$. 

Therefore, $k$ is the minimum cost among all p-decompositions of $T$.
\end{proof}
In what follows, we say a p-decomposition $\pi$ of a 14-tile $T$ is \emph{minimal} if $\cost_T(\pi) = \dfncy(T)$. We next present a method for determining the deficiency of a 14-tile by trying to construct a minimal p-decomposition of $T$.

 \paragraph{Our procedure}
 We construct a quadtree, where each node is denoted by a word $\alpha$ in the alphabet $\Sigma=\{1,2,3,4\}$ and we attach to each $\alpha$ a subsequence $S_\alpha$ of $T$, which denotes the set of tiles remaining to be processed, and a p-decomposition $\pi_\alpha$ of $T$, and maintain a queue $Q$ of nodes to be explored and a global current best value $\val$. Initially, the root node, denoted as the empty word $\varepsilon$, is put in $Q$. We define $S_\varepsilon=T$, the p-decomposition $\pi_\varepsilon$ with $\pi_\varepsilon(i)=\varnothing$ for $0\leq i\leq 5$, and let $\val=6$, the largest possible deficiency number.
 
 Suppose a node $\alpha$ is popped out from the queue $Q$ and suppose $\val$ is the current best value and $S_\alpha$ and $\pi_\alpha$ are,  respectively, the  subsequence and the p-decomposition associated with $\alpha$. We now \emph{expand} $\alpha$ as follows. Let $a=S_\alpha[0]$ be the first tile in the subsequence $S_\alpha$. We add  up to four child nodes $\alpha1$ to $\alpha4$ under the node $\alpha$. For each $\ell\in\{1,2,3,4\}$, the subsequence $S_{\alpha\ell}$ and the p-decomposition $\pi_{\alpha\ell}$ are obtained by \emph{reducing} $S_\alpha$ and \emph{refining} $\pi_\alpha$ respectively. 
 
 \paragraph{Notation} Given a tile $t=(c,n)$, we write $t^+$ ($t^{++}$, resp.) for $(c,n+1)$ ($(c,n+2)$, resp.) as long as $1\leq n\leq 8$ ($1\leq n\leq 7$, resp.); and write $ t^-$ ( $t^{--}$, resp.) for $(c,n-1)$ ($(c,n-2)$, resp.) as long as $2\leq n \leq 9$ ($3\leq n\leq 9$, resp.).
 
 \begin{itemize}
     \item Define $S_{\alpha1}=S_\alpha\setminus (a)$ and $\pi_{\alpha1}=\pi_\alpha$.
     
     \item If $a^+$ or $a^{++}$ is in $S_\alpha$,\footnote{If this is not the case, then this node is not introduced. Similar assumption applies to the other two child nodes.}
     define $S_{\alpha2}=S_\alpha \setminus (aa^+a^{++})$.\footnote{Note that if, for example, $(aa^+a^{++}) \cap S_\alpha=(aa^{++})$, then $S_\alpha \setminus (aa^+a^{++})=S_\alpha \setminus (aa^{++})$.} Suppose $i$  is the first index in $\{1,2,3,4\}$ such that  $\pi_\alpha(i)=\varnothing$. Define $\pi_{\alpha2}(i)=(aa^+a^{++})\cap S_\alpha$  and $\pi_{\alpha2}(j)= \pi_\alpha(j)$ for $j\not=i$.\footnote{Note that if $(a^-aa^+)$ is contained in $S$, then this chow shall have been put in some $\pi$ in a previous step when $a^-$ is examined.} 
    
    \item If $(aa)\subseteq S_\alpha$ and $\pi_\alpha(5)$ is empty, define $S_{\alpha3} = S_\alpha \setminus (aa)$ and $\pi_{\alpha3}(5)=(aa)$ and $\pi_{\alpha3}(j)=\pi_\alpha(j)$ for $1\leq j\leq 4$.

    \item If $(aa)\subseteq S_\alpha$,  define  $S_{\alpha4} = S_\alpha \setminus (aaa)$.
    Suppose $i$ is the first index in $\{1,2,3,4\}$ such that $\pi_\alpha(i)=\varnothing$. Define $\pi_{\alpha4}(i)=(aaa)\cap S_\alpha$  and $\pi_{\alpha4}(j)= \pi_\alpha(j)$ for $j\not=i$.
 \end{itemize}
 
For each child node $\alpha\ell$, if $\pi_{\alpha\ell}(i)\not=\varnothing$ for all  $1\leq i\leq 5$,
then we compare the value $\cost_T(\pi_{\alpha\ell})$ with the current best value $\val$, update $\val$ as $\cost_T(\pi_{\alpha\ell})$ if the latter is smaller, and terminate this branch. 

Suppose $\pi_{\alpha\ell}(i)$ is empty for some $i$ but there are no more tiles left to process, i.e., $S_{\alpha\ell}=\varnothing$. We may recycle from $\pi_{\alpha\ell}(0)$ some discarded tiles and put it in some subsequence of $\pi_{\alpha\ell}$ to get a refined completable p-decomposition.  Suppose $\pi_{\alpha\ell}(5)\not=\varnothing$ or  $\pi_{\alpha\ell}(0)$ contains four or more tiles. Let $n$ be the number of empty $\pi_{\alpha\ell}(i)$ for $1\leq i\leq 5$. Then, by Corollary~\ref{coro:emptyss}, there exists another completable p-decomposition $\pi'$ such that  $\cost_{T}(\pi') \leq \cost_T(\pi_{\alpha\ell})-n$. We then directly compare $\cost_T(\pi_{\alpha\ell})-n$ with $\val$ and update $\val$ as  $\cost_T(\pi_{\alpha\ell})-n$ if the latter is smaller, and terminate this branch. 

Suppose $S_{\alpha\ell}=\varnothing$, $\pi_{\alpha\ell}(5)=\varnothing$ and $\pi_{\alpha\ell}(0)$ contains two (three) different tiles $u,v$ ($u,v,w$). We set $\cost=\cost_T(\pi_{\alpha\ell})$ if $T$ contains two kongs $(uuuu)$ and $(vvvv)$ (three kongs $(uuuu)$, $(vvvv)$, and $(wwww)$), and set $\cost=\cost_T(\pi_{\alpha\ell})-1$ otherwise. Then we compare $\cost$ with $\val$ and update $\val$ as  $\cost$ if the latter is smaller, and terminate this branch. 

If $S_{\alpha\ell}\not=\varnothing$ and $\pi_{\alpha\ell}$ has an empty subsequence, then we put $\alpha\ell$ in $Q$. 

After $\alpha$ is expanded, if $Q$ is nonempty and $\val>0$, then we pop out another node $\beta$ from $Q$ and expand $\beta$ as above. The whole procedure is stopped either when we have $\val=0$ or when $Q$ is empty.

%
\section{Decision making}
Given a 14-tile $H$, we have developed methods for deciding if $H$ is complete, and in case it is not complete, measuring how good or bad it is. The next important task is to decide, in case $H$ is not complete, which tile should the agent discard?

We suppose the agent maintains a  \emph{knowledge base} $\omega$, which contains all her information about the available tiles. For simplicity, we represent $\omega$ as a 27-tuple, where $\omega[9c+n]$ ($0\leq c\leq 2$ and $1\leq n \leq 9$) denotes the number of identical tiles $t=(c,n)$ the agent \emph{believes} to be available. Initially, we have $\omega[9c+n] = 4$ for each tile $t=(c,n)$. When all players have their hands, the agent also has her hand $H$ and updates her $\omega$ accordingly as
\begin{align}
\omega[9c+n] = 4- \mbox{the number of  $(c,n)$ in $H$}. 
\end{align}
Then she continues to modify $\omega$ according to the process of the game. For example, if one player discards a tile $t=(c,n)$ and no pong is formed from the discard of $t$, then the agent updates its $\omega$ by decreasing by one its $\omega[9c+n]$ and leaves the other items unchanged. More advanced techniques will be investigated in future work, where we may record the history of every player, and \emph{infer} for each tile $t=(c,n)$, what is the most likely value of $\omega[9c+n]$. 

Now suppose the agent has a 14-tile $T$ and her current knowledge base is $\omega$. For simplicity, we write $\omega[9c+n]$ as $\omega(c,n)$ for any tile $t=(c,n)$ and 
write 
\begin{align}
\norm{\omega} \equiv \sum\{\omega(c,n) \mid 0\leq c\leq 2, 1\leq n\leq 9\}
\end{align}
for the number of available tiles.
Our task is to decide which tile she should discard. To this end, for each  $0\leq i \leq 13$, we associate $i$ with a value $\delta_{T,\omega}(i)$ which is defined as 
\begin{align}\label{eq:delta}
    \delta_{T,\omega}(i) &= \sum_{0\leq c\leq 2, 1\leq n\leq 9}\Big\{\omega(c,n) \mid \dfncy(T[i/(c,n)]) < \dfncy(T)\Big\}
\end{align}
Apparently, we change $T[i]$ to an available tile $(c,n)$ only if it is profitable, i.e., if the 14-tile obtained by replacing $T[i]$ with $(c,n)$, written $T[i/(c,n)]$, has a smaller deficiency number. For each index $i$, $\delta_{T,\omega}(i)$ denotes the number of available tiles $(c,n)$ (identical tiles are counted differently) such that $T[i/(c,n)]$ has a smaller deficiency than $T$.

After $\delta_{T,\omega}(i)$ is computed for each $0\leq i\leq 13$, we then could pick one index $i$ that has the maximum value
and discard $T[i]$, i.e.,
\begin{align}
\label{eq:discard}
    \discard(T,\omega) = \argmax_{T[i]\;:\; 0\leq i\leq 13} \delta_{T,\omega}(i).
\end{align}
\begin{example}\label{ex:decision}
For example, consider 
\begin{align}
H &= (B1B1B1B8B8B9)(C1C5C5C5)(D1D5D6D7)\\
\omega &= (111111111)(111111111)(111111111)
\end{align}
Here, for convenience, we assume that for each tile $t=(c,n)$ there is only one identical tile available, i.e., $\omega[i] = 1$ for any $0\leq i \leq 26$. 
Then we have $\dfncy(H)=2$ and $\delta_{H,\omega}=[0,0,0,3,3,7,6,0,0,0,6,0,0,0]$. In particular, we have $\delta_H(5)=7$, $\delta_H(6)=6$, and $\delta_H(10)=6$. Thus, according to Eq.~\eqref{eq:discard}, we should discard $H[5]=B9$. In this case,  changing $B9$ as any of $B8,C1,C2,C3$, $D1,D2,D3$ will decrease the deficiency number from $2$ to $1$.
\end{example}

Eq.~\eqref{eq:discard} provides a very good, and easy to compute, heuristics for deciding which tile the agent should discard when her goal is to win as early as possible. It is, however, not an optimal policy. 

\begin{example}\label{ex:optimal}
Consider the following example. 
\begin{align}
    H &= (B1B2B3B7B8B9)(C2C2C2)(D1D2D3D5D9)\\
    \omega &=
    (000000000)(000000000)(010110001)
\end{align}
Suppose our goal is to decide, given the knowledge base $\omega$, which tile in $H$ 
to discard if we want to increase our chance to \emph{win within 2 tile changes}. If we adopt the heuristics given in Eq.~\ref{eq:discard}, then by $\dfncy(H)=1$ and $\delta_{H,\omega}(12) = \delta_{H,\omega}(13) = 1$ and $\delta_{H,\omega}(i) = 0$ for $0\leq i < 12$, we should discard either $T[12]=D5$ or $T[13]=D9$. But we will see this is not exact, and discarding $D9$ is more profitable. To this end, we need to compute, for each $0\leq i\leq 13$, the chance of completing $H$ if we discard $H[i]$. Note that $D2,D4,D5$ and $D9$ are the only  tiles available. 

In the first tile change, if we discard any Bamboo tile, then, as there are no Bamboo tiles available, the corresponding chow will never be completed again. To make the revised 14-tile complete, we need at least three more tile changes. That is, the success chance is 0. This is also true if we discard a $C2$, a $D2$ or a $D3$ first. But if we first discard any of  $D1$, $D5$ and $D9$,  then we still have a chance to complete $T$ within two tile changes.

Suppose we discard $D5$ in the first tile change and replace it with an available tile $t=(c,n)$. If $t$ happens to be $D9$, then we have a complete 14-tile. The chance of replacing $D5$ with $D9$ in the first tile change is $1/4$ as there are, according to $\omega$, only 4 tiles are available.  If $t$ is not $D9$ (say it is $D2$), we cannot make a pair using it, as  there is no more $D2$ left. To complete $H[12/t]$ in one tile change, we have to discard the new obtained tile $t$ and change it to $D9$. The chance of this is equal to 
\begin{align*}
&\mbox{({\it the chance of changing $D5$ to $t$}) $\times$ ({\it the chance of changing $t$ to $D9$})}, 
\end{align*}
which is  $1/4 \times 1/3$ 
(note that, after changing $D5$ to $t$, we have only 3 tiles available). Therefore, the chance of completing $H$ within two tile changes if we first discard $D5$ is $1/4+3\times (1/4\times 1/3)=1/2$. 

Similarly, if we discard $D9$ first, then the chance of completing $H$ in one tile change is also $1/4$. Suppose $D9$ is replaced with a $D2$ or another $D9$ in the first tile change. Then, in order to complete the revised 14-tile, we have to discard the new given tile and replace it with $D5$. The success chance is also $1/4 \times 1/3$ for each of $D2$ and $D9$. But, if we replace $D9$ with $D4$, then, to complete $H[D9/D4]$, we may either discard $D4$ and replace it with $D5$ or discard $D1$ and replace it with either $D2$ or $D5$. Apparently, in the second tile change, it is more profitable to discard $D1$ instead of $D4$. The success chance is $1/4 \times \max(1/3,2/3)=1/4\times 2/3$. Thus the chance of completing $H$ within two tile changes if we first discard $D9$ is $1/4 + 2\times (1/4\times 1/3) + 1/4\times 2/3=7/12>1/2$. 

Lastly, suppose we discard $D1$ first. Then if we get a $D2$, then we cannot complete $H$ by one more tile change; if we get a $D4$, then need replace $D5$ with $D9$ or replace $D9$ with either $D2$ or $D5$; if we get a $D5$ ($D9$), then we need replace $D9$ ($D5$) with $D4$. The success chance is $0 + 1/4 \times \max(1/3,2/3) + 1/4\times 1/3 + 1/4\times 1/3 = 1/3$.

In summary, discarding $D9$ is more profitable than discarding any other tile.
\end{example}

From the above example, we can see that the value of a tile depends on how many tile changes we could have before the game is finished. For any 14-tile $T$, the \emph{step 0 value} of $T$ is defined as
\begin{align}
\val_0(T) &= 
    \begin{cases}
      1, & \mbox{if $T$ is complete}\\
      0, & \text{otherwise}
    \end{cases}
\end{align}

\begin{definition}
Suppose $T$ is an incomplete 14-tile and  $\omega$ is the current knowledge base of the agent. 
For any $0\leq i\leq 13$ and any $k>0$, we define the
\emph{step $k$ value} of the $i$-th tile of $T$ w.r.t. $\omega$, written  $\val_k(T,\omega,i)$, as the chance of completing $T$ within $k$ tile changes if $T[i]$ is discarded first. 
\end{definition}
For the special case when $k=1$, $\val_1(T,\omega,i)$ is indeed the success chance of obtaining a complete 14-tile by replacing $T[i]$ with an available tile.


Using the above notion, we further introduce a notion that measures the chance of completing an incomplete 14-tile $T$ within $k$ tile changes.
\begin{definition}
Suppose $T$ is an incomplete 14-tile and  $\omega$ is the current knowledge base of the agent. For any $k>0$, the \emph{step $k$ value} of $T$ w.r.t. $\omega$ is defined as
\begin{align}
\val_k(T,\omega) \equiv  \max_{0\leq i\leq 13} \val_k(T,\omega,i).
\end{align}
In case $T$ is complete, we define $\val_k(T,\omega)=1$.
\end{definition}

Clearly, $\val_k(T,\omega)=0$ if $\dfncy(T)>k$ or $0<\dfncy(T)\leq k$ but $\norm{\omega}=0$. 

We next give a recursive method for computing  $\val_k(T,\omega,i)$. First of all, we note that $\val_1(T,\omega,i)$ is the weighted sum over all available tiles $(c,n)$ of the chance of obtaining a complete 14-tile  by changing $T[i]$ as $(c,n)$, where the chance of selecting $(c,n)$ to replace $T[i]$ is $\omega(c,n)/\norm{\omega}$. Thus we have 

\begin{lemma}
For any incomplete 14-tile $T$, any knowledge base $\omega$ with $\norm{\omega}>0$, and any $0\leq i\leq 13$, we have 
\begin{align}
\label{eq:val1}
\val_1(T,\omega,i) &= \sum_{(c,n)\;:\;\omega(c,n)>0} \frac{\omega(c,n)}{\norm{\omega}} \times \val_0 \big(T[i/(c,n)] \big) .
\end{align} 
In case $\norm{\omega}=0$, we have $\val_1(T,\omega)=0$ for any $i$.
\end{lemma}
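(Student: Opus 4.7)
The proof is essentially an unpacking of definitions, so my plan is to make each step explicit rather than to introduce new machinery. First, I would recall that $\val_1(T,\omega,i)$ is defined as the probability of completing $T$ within one tile change given that the first move is to discard $T[i]$. After the discard, the only remaining random event in a single-step horizon is the drawing of a replacement tile from the pool described by $\omega$: under the natural semantics of the knowledge base, each of the $\norm{\omega}$ physical copies currently believed to be available is equally likely to be drawn, so the marginal probability of drawing a tile of type $(c,n)$ equals $\omega(c,n)/\norm{\omega}$. The hypothesis $\norm{\omega}>0$ ensures this denominator is well-defined.

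Next, I would apply the law of total probability, decomposing the event ``the resulting 14-tile is complete'' over the disjoint events ``the drawn replacement is $(c,n)$'' for $(c,n)$ ranging over types with $\omega(c,n)>0$. Conditional on drawing $(c,n)$, the resulting 14-tile is deterministically $T[i/(c,n)]$, so the conditional probability of completion is exactly the indicator $\val_0(T[i/(c,n)])$, which is $1$ when $T[i/(c,n)]$ is complete and $0$ otherwise. Multiplying each conditional by its marginal weight and summing over available types yields the displayed formula; types with $\omega(c,n)=0$ contribute a zero weight and may be omitted from the sum without changing its value.

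For the degenerate case $\norm{\omega}=0$, no replacement tile can be drawn, so no discard can produce a legal next state within one tile change; thus $\val_1(T,\omega,i)=0$ for every $i$, and hence $\val_1(T,\omega)=\max_{0\leq i\leq 13}\val_1(T,\omega,i)=0$, as asserted. I do not expect a real obstacle here: the only point that deserves a careful sentence is the articulation of the probabilistic model behind $\omega$, namely that each physical copy of an available tile is equally likely to be drawn, which is what licenses the weights $\omega(c,n)/\norm{\omega}$ and makes the total-probability decomposition valid.
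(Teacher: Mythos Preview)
Your argument is correct and matches the paper's own justification, which simply observes that $\val_1(T,\omega,i)$ is the weighted sum over available tiles $(c,n)$ of the chance that $T[i/(c,n)]$ is complete, with weight $\omega(c,n)/\norm{\omega}$. Your version is more explicit about the law of total probability and the degenerate case, but the approach is the same.
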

Comparing equations \eqref{eq:delta} and \eqref{eq:val1}, it is easy to see that 
\begin{align}\label{eq:delta=val1}
\val_1(T,\omega,i) = {\delta_{T,\omega}(i)}/{\norm{\omega}}
\quad\quad (0\leq i < 14).
\end{align}

If $T$ is not complete, the chance of completing $T$ within $k$ tile changes is the weighted sum of the chances of completing $T[i/(c,n)]$ within $k-1$ tile changes over $(c,n)$ with $\omega(c,n)>0$. Thus we have the following characterisation:
\begin{lemma}
For any incomplete 14-tile $T$, any knowledge base $\omega$ with $\norm{\omega}>0$, and any $0\leq i\leq 13$, we have 
\begin{align} 
   \val_k(T, \omega, i) &= \sum_{(c,n)\ :\ \omega(c,n)>0} \frac{\omega(c,n)}{\norm{\omega}} \times \val_{k-1}\big(T[i/(c,n)],\omega\!-\!(c,n)\big)
\end{align}
where $\omega\!-\!(c,n)$ denotes the knowledge base obtained by decreasing $\omega(c,n)$ by 1, as this tile $(c,n)$ becomes unavailable after replacing $T[i]$ with it. 
\end{lemma}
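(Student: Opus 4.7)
The plan is a direct application of the law of total probability, conditioning on the identity of the tile drawn immediately after discarding $T[i]$. First I would unpack the definition: $\val_k(T,\omega,i)$ is the probability that, starting from state $(T,\omega)$, discarding $T[i]$, drawing a replacement tile from the pool the agent believes to be available, and then playing optimally for $k-1$ more rounds, produces a complete 14-tile within the $k$ allotted changes. The base case $k=1$ is exactly Eq.~\eqref{eq:val1}, once one notes that $\val_0(\cdot)$ does not depend on the knowledge base, so it suffices to handle $k\geq 2$ in the same style.

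Next I would condition on the first-drawn tile. By hypothesis the agent's belief places probability $\omega(c,n)/\norm{\omega}$ on drawing tile $(c,n)$, and probability $0$ when $\omega(c,n)=0$, so such tiles can be harmlessly omitted from the sum. Conditional on the draw being $(c,n)$, the post-draw state is $(T[i/(c,n)], \omega-(c,n))$: the hand is updated by substituting $(c,n)$ at position $i$, and the knowledge base is decremented at $(c,n)$ because the newly acquired copy is no longer in the available pool. The agent has $k-1$ remaining changes and picks her next discard optimally, so the conditional success probability from this state is, by definition,
\begin{align*}
  \val_{k-1}(T[i/(c,n)], \omega-(c,n)) \;=\; \max_{0\leq j\leq 13}\val_{k-1}\bigl(T[i/(c,n)], \omega-(c,n), j\bigr),
\end{align*}
with the convention that this value equals $1$ whenever $T[i/(c,n)]$ is already complete (so the remaining $k-1$ changes are simply unused). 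Summing over $(c,n)$ weighted by $\omega(c,n)/\norm{\omega}$ yields the claimed identity.

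The main obstacle is conceptual rather than technical: one must argue that the $\val_{k-1}$ appearing in the sum truly captures optimal future play from the post-draw state, not merely a fixed continuation strategy. This is justified because $\val_{k-1}(T',\omega')$ is defined as a maximum over next-discard choices, so by the Bellman/dynamic-programming principle, executing the best next discard from $(T[i/(c,n)], \omega-(c,n))$ and then playing optimally for $k-2$ further rounds is itself optimal across the remaining $k-1$ changes. Once this point is granted, the argument collapses to routine conditional-probability bookkeeping about how $\omega$ is updated after a single draw.
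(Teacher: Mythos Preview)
Your proposal is correct and matches the paper's approach. The paper does not give a formal proof of this lemma; it offers only the one-sentence justification ``the chance of completing $T$ within $k$ tile changes is the weighted sum of the chances of completing $T[i/(c,n)]$ within $k-1$ tile changes over $(c,n)$ with $\omega(c,n)>0$,'' and your write-up is a careful unpacking of exactly that law-of-total-probability argument, including the Bellman-type observation that $\val_{k-1}$ already encodes optimal future play.
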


The above two lemmas give a recursive method for selecting the tile to discard:
\paragraph{The best tile to discard}
Suppose $T$ is an incomplete 14-tile and $\omega$ is a knowledge base with $\norm{\omega}>0$. Then $T[i] = \discard_k(T,\omega)$ is the tile that has the best chance for completing $T$ within $k$ tile changes if we discard $T[i]$ first, where
\begin{align}\label{eq:discard-k}
\discard_k(T,\omega) = \argmax_{T[i]\ :\ 0\leq i\leq 13} \val_{k}(T,\omega,i)
\end{align}
By Eq.~\eqref{eq:delta=val1},  $\discard_1(T,\omega)$ is the same as the tile $\discard(T,\omega)$ defined in Eq.~\eqref{eq:discard}.

\section{Conclusions and Future Work}
In this paper, we have initiated a mathematical and AI study of the Mahjong game. The definition of the deficiency number as well as the notions of knowledge base and step $k$ value will play important role in devising efficient computer programs for playing Mahjong. This is the topic of an ongoing research. 

Despite of its extremely popularity, there are very few mathematical or AI research papers which are devoted to the study of the Mahjong game. To our best knowledge, \cite{Cheng18} is the first serious attempt to study the Mahjong game using mathematical (mainly elementary combinatorial theory) techniques. In their paper, Cheng, Li and Li studied a special combinatorial problem in Mahjong game, viz., the so-called \emph{$k$-gate problem}. A pure 13-tile $T$ is called a \emph{nine-gate} if $T$ becomes complete if we add any tile in the same colour. In general, for $1\leq k\leq 9$, $T$ is called an $k$-gate if there are $k$ tiles with different values such that each of these $k$ tiles can complete $T$ but no any other tile can do the same job. It is easy to see that the $k$-gate problem can also be described and solved in the formalism developed in our work. To find all  $k$-gates, one need only decide, for each pure 13-tile $T$, and any tile $t=i$ ($1\leq i\leq 9$) with the same colour, whether there are exactly $k$ tiles such that $T$ plus tile $i$ is complete.   

There are at least three directions to extend the above work. First, we may include more tiles in $\mahjong_0$, e.g., the \emph{winds}, the \emph{dragons}, and the \emph{bonus} (see Figure~\ref{fig:mjtiles}). Second, we may expand and/or restrict the set of legal 14-tiles. For examples, we may allow any seven pairs as a complete 14-tile, or we may require that any complete 14-tile has at most two colours. Third, different complete 14-tiles may have different scores. For example, a pure complete 14-tile may worth much more than a hybrid one. Future work will address these issues and adapt our methods according to different score systems. 

\appendix
\section{Proof of Proposition~\ref{prop0}}
\label{sec:appendix1}
\begin{proof}

If $V$ contains four disjoint melds, then $V$ can be completed within two tile changes by constructing a new pair. 

Suppose $V$ contains three, but no more, disjoint melds. Let $W=(uvwxy)$ be the  (not sorted) subsequence  containing the rest of $V$. Since $W$ does not contain a meld, it has at most two pairs. Suppose $p,q,r$ are any three pairwise different tiles in $W$. We assert that there is one of $p,q,r$ which can be completed into a meld within two tile changes.  This is because, if, say, $p$ cannot be completed into a pong, then $V\setminus W$ must contain at least two identical tiles $p$, i.e., $(pp)\subseteq V\setminus W$. That is, if $(pp)\subseteq W$, then $V$ must contain the kong $(pppp)$; if $(pp)$ is not contained in $W$, then $V$ must contain the pong $(ppp)$. Similarly, suppose $(pab)$ is a chow that is not completable. Then $(ab)$ is not contained in $W$ and, hence, either $a$ or $b$ is not in $W$. If $a\in W$, then $V\setminus W$ must contain $(bbbb)$; if $b\in W$, then $V\setminus W$ must contain $(aaaa)$; if neither $a$ nor $b$ is in $W$, then $V$ must contain either $(aaaa)$ or $(bbbb)$. Thus, if none of $p,q,r$ is completable, then $T$ must contain three pongs and at least one kong $(zzzz)$ other than $(pppp),(qqqq),(rrrr)$. The existence of $(zzzz)$ is due to that $(pqr)$ is not a meld. This, however, contradicts the assumption that $V$ contains no more than three disjoint melds. 

Suppose $W$ contains two pairs say $u=v$ and $x=y$. If $w$ can be completed into a meld, then either $x$ or $u$ is not involved in the meld. Thus completing $w$ into a meld and using either $(xy)$ or $(uv)$ as the eye will complete $V$ within two tile changes. If $w$ cannot be completed into a meld, then $V$ contains $(www)$ and, because no chow $(wab)$ is completable, there exists a tile $w'\in \{a,b\}$ and $w'\not=u,x$ such that $(w'w'w'w')\subset V$. If $V$ contains both $(xxx)$ and $(uuu)$, then, together with $(www)$ and $(w'w'w')$, we shall have four pongs in  $V$, a contradiction. Thus either $(xxx)$ or $(uuu)$ is not contained in $V$. This implies that  in this subcase $V$ can be completed by completing $(xy)$ into $(xxx)$ or completing $(uv)$ into $(uuu)$ using one tile change.

Suppose $W$ contains exactly one pair, say, $u=v$ and $u,x,y,w$ are pairwise different. If none of $u,x,y,w$ can be completed into a pong, then by previous analysis $V$ should contain $(uuuu)$, $(xxx), (yyy)$ and $(www)$, which contradicts the assumption that $W$ does not contain four disjoint melds. If there is one of $x,y,w$ that is completable into a pong or a chow does not involve $u=v$, then $V$ can be completed within two tile changes. Suppose this is not the case. Then $V$ should contain $(xxx),(yyy),(www)$ but not $(uuu)$, and one of $x,y,w$, say $x$, can be completed into a chow $(xuz)$. Then $V$ can be completed within two tile changes by constructing the chow $(xuz)$ and use the pair $(uv)$ as eye.

If $W$ contains no pair, then $u,v,w,x,y$ are pairwise different. Suppose $u<v<w<x<y$. After routine check, we can see that there exist two tiles $z_1,z_2$ such that $z_1<v<w<x<z_2$ and $(uvwz_1)$ and $(wxyz_2)$ contain two disjoint chows.\footnote{This is because $v-u>2$ and $y-x>2$ cannot happen simultaneously. For example, if $W=(12459)$, then we may select $z_1=3$ and $z_2=6$, and we have chows $(123)$ and $(456)$.} If neither $z_1$ nor $z_2$ is available, then $V\setminus W$ contains two kongs $(z_1z_1z_1z_1)$ and $(z_2z_2z_2z_2)$. This is impossible as $V\setminus W$ consists of three melds and $z_2>z_1+2$. Suppose without loss of generality $z_1$ is available and $(vwz_1)$ is a chow. If $V$ does not contain all of $(uuuu),(xxxx)$ and $(yyyy)$, then we may construct a pair, thus complete $V$, by one more tile change. In case $V$ does contain all of $(uuuu),(xxxx)$ and $(yyyy)$, then the other two tiles of $V$ are $v$ and $w$. Together with three pongs $(uuu),(xxx),(yyy)$, we may complete $V$ by forming the chow in $(wxyz_2)$ (by replacing $u$ with $z_2$) and form a pair $(vv)$ by replacing with $v$ the tile in $(wxy)$ that is not in the chow involving $z_2$. 

In the last, we consider the case when  $V$ contains two, and no more, disjoint melds. Let $W$ be the subsequence containing the rest eight tiles of $V$. Since it has no meld, $W$ contains two to four pairs. For any two pairs $(xx)$ and $(yy)$ contained in $W$, if neither can be completed into a pong, then $V\setminus W$ contains two identical $x$ and two identical $y$. Since $V\setminus W$ consists of two melds, the two melds must be chows with form $(xya)$ and $(xyb)$, where $a=b$ is possible. This implies, however, that $V$ contains three disjoint melds $(xxx)$, $(yyy)$ and $(xya)$, a contradiction. Therefore, if $W$ contains $2\leq k\leq 4$ different pairs, then $k-1$ of these pairs can be completed into pongs. If $k\geq 3$, then $V$ can be completed within two tile changes. If $k=2$, then let $(xx)$ and $(yy)$ be the two pairs and $u,v,w,z$ be the rest four tiles (not sorted) in $W$. By Lemma~\ref{fact2}, there is a pchow contained in $(uvwz)$. Let $(uv)$ be this pchow. If $(uv)$ is completable, then together with a completable pair $(xx)$ or $(yy)$, we can complete $V$ within two tile changes. Suppose this is not the case and $(uv)$ is not completable. Then, for any $a$ such that $(uva)$ (after sorting) is a chow, $V\setminus W$ contains the kong $(aaaa)$. Because $V\setminus W$ consists of two melds, this is possible only if $a$ is between $u,v$ and $V\setminus W = (aaa)(uva)$. Note that $a$ cannot be either $x$ or $y$. Both pairs are completable as $V\setminus W = (aaa)(uva)$ does not contain $x$ or $y$. Therefore, we can complete $V$ by complete both $(xx)$ and $(yy)$, obtaining four melds $(xxx),(yyy),(uva),(uva)$ and a pair $(aa)$.
\end{proof}

\section{Proof of Proposition~\ref{prop:normal}}
\label{sec:appendix2}
\begin{proof}
By Lemma~\ref{lem:saturated}, if $\pi$ is not saturated, there exists a saturated p-decomposition with a smaller cost. In the following, we assume $\pi$ is saturated.

Suppose $\pi(5)\not=\varnothing$. Without loss of generality, let $\pi(4)=\varnothing$. Clearly, $\pi(0)$ contains at least 3 tiles. Since $\pi$ is saturated, $\pi(0)$ does not contain any meld, nor any pmeld or single tile that is completable. This implies that, for any $x$ in $\pi(0)$, $T$ must contain the pong $(xxx)$. Moreover, for any chow $(xyz)$ ($x,y,z$ may be not ordered) containing $x$, $T$ must contain either $(yyyy)$ or $(zzzz)$. In particular, for any $x\in \pi(0)$, $T$ must 
\begin{itemize}
    \item contain $(xxx)$, and 
    \item contain either $(x^{--}x^{--}x^{--}x^{--})$ or $(x^{-}x^{-}x^{-}x^{-})$ if $x\geq 3$, and
    \item contain either $(x^{-}x^{-}x^{-}x^{-})$ or $(x^{+}x^{+}x^{+}x^{+})$ if $2\leq x\leq 8$, and
    \item contain either  $(x^{++}x^{++}x^{++}x^{++})$ or $(x^{+}x^{+}x^{+}x^{+})$ if $x\leq 7$. 
\end{itemize}

Let $u,v,w$ be three tiles contained in $\pi(0)$. By our assumption, $(u,v,w)$ is neither a chow nor a pong.   

\paragraph{Case 1} Suppose $u,v,w$ are in different colours. By the previous analysis,  $T$ should contain $(uuu),(vvv),(www)$ and at least three kongs in different colours. This is impossible as $T$ is a 14-tile. 

\paragraph{Case 2} Suppose $u<v$ are in the same colour but $w$ is in a different colour. As above, $T$ should contain $(uuu),(vvv),(www)$ and at least two kongs in different colours. This is also impossible as $T$ is a 14-tile.

\paragraph{Case 3} Suppose $u<v<w$ are in the same colour.  As above, $T$ should contain $(uuu),(vvv)$, $(www)$. Moreover, if $w<8$ ($u>2$), then $T$ should contain $(xxxx)$ for some $x>w$ ($x<u$); if $w>v+1$ ($v>u+1$), then $T$ should contain $(yyyy)$ for some $v<y<w$ ($u<y<v$).  Because $(uvw)$ is not a chow, we have $w>u+2$, and thus have either $w>v+1$ or $u>u+1$. To not violate the constraint that $T$ contains 14 tiles, we can have at most one kong $(xxxx)$ with $x\not\in \{u,v,w\}$. This is only possible when $u\leq 2$, $w\geq 8$ and either $v=u+1$ or $v=w-1$, which implies $w-v\geq 5$. But in this case, $T$ should also contain either $(w^-w^-w^-w^-)$ or $(w^{--}w^{--}w^{--}w^{--})$ and either $(v^+v^+v^+v^+)$ or $(v^{++}v^{++}v^{++}v^{++})$. A contradiction.


\paragraph{Case 4} Suppose $u=v<w$ are in the same colour. In this case, $T$ should contain $(uuuu)$ and $(www)$. 

When $u=v=1$ and $w=2$, if $T$ does not contain $(2222)$, then it should contain $(3333)$; if $T$ contains $(2222)$, then it should also contain either $(3333)$ or $(4444)$. In the first subcase, we should have that $(11223333)$ is contained in $\bigcup_{i=1}^5 \pi(i)$ and $(11112223333)$ is contained in $T$. Assume $x,y,z$ are the rest tiles contained in $T$. Since $x,y,z$ cannot be $1,2,3$, we have a new completable decomposition $\hat{\pi}=(111)(222)(333)(13)(x)$, which has cost smaller than $\pi$. In the second subcase, without loss of generality, we assume that $T$ contains $(1111),(2222)$ and $(4444)$ and two additional tiles $x,y$. Let  
$\hat{\pi}=(111)(222)(444)(12)(x)$. Then $\hat{\pi}$ is a completable p-decomposition which has a smaller cost than $\pi$.

The case when $u=v=8$ and $w=9$ is dually analogous. 
    
Now suppose $1<u=v<8$ and $u<w$. If $w=u+1$, then, as $(uw)$ is not completable, $\bigcup_{i=1}^5 \pi(i)$ must contain $(u^-u^-u^-u^-)$ and $(w^+w^+w^+w^+)$. Recall that $T$ already contains $(uuuu)$ and $(www)$. This contradicts the fact that $T$ is a 14-tile. 
Similarly, if $w=u+2$, then $\bigcup_{i=1}^5 \pi(i)$ contains $(u^+u^+u^+u^+)$. Let $x,y,z$ be the rest three tiles in $T$ and assume without loss of generality $x,y,z$ are not $w$. This shows that $T$ has a better p-decomposition $$\hat{\pi}=(uu^+w)(uu^+w)(uu^+w)(uu^+)(x).$$
If $w>u+2$, then we shall have at least two tiles $x,y$ not in $\{u,w\}$ s.t. $T$ contains $(xxxx)$ and $(yyyy)$. This, again, violates the assumption that $T$ is a 14-tile.

\paragraph{Case 5} The case when  $u<v=w$ are in the same colour is analogous to the previous one.

Suppose $\pi(5)=\varnothing$ and $\pi(0)$ contains 4 or more tiles. Since $\pi$ is saturated, there are no pairs in $\pi(0)$. If there is a tile $t$ in $\pi(0)$ such that $T$ contains 3 or less identical tiles $t$, then we may replace $\pi(5)$ with the single tile $(t)$ and obtain a new completable p-decomposition with a smaller cost. If this is not possible for any $x\in \pi(0)$, then, for each $x\in \pi(0)$, $T$ should contain the pong $(xxx)$. As there are four or more tiles in $\pi(0)$, we have a new completable p-decomposition $\pi^*$ consists of four pongs and, possibly, an empty $\pi^*(5)$, which has a smaller cost. 
\end{proof}

As a direct corollary, we know 
\begin{coro}
\label{coro1}
Let $T$ be a 14-tile and $\pi$ a completable p-decomposition of $T$. If $\pi(i)=\varnothing$ for some $1\leq i\leq 4$, then there exists another completable p-decomposition $\hat{\pi}$ of $T$ such that $\hat{\pi}(i)\not=\varnothing$ for any $1\leq i\leq 4$ and $\cost_T(\hat{\pi}) < \cost_T(\pi)$.
\end{coro}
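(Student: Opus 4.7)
The plan is to derive this corollary directly from Proposition~\ref{prop:normal} using only an elementary tile count. Proposition~\ref{prop:normal} has a disjunctive hypothesis---either $|\pi(0)|\geq 4$ or $\pi(5)\neq \varnothing$---so my task reduces to verifying that the corollary's assumption (some $\pi(i_0)$ with $1\leq i_0\leq 4$ is empty) already forces at least one of these alternatives to hold.

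For the verification, I would use that $T$ is partitioned as $\pi(0)\cup \bigcup_{i=1}^{5}\pi(i)$ with $|T|=14$, and that by definition a meld slot $\pi(i)$ with $1\leq i\leq 4$ contains at most $3$ tiles while the eye slot $\pi(5)$ contains at most $2$ tiles. The hypothesis $\pi(i_0)=\varnothing$ leaves at most three nonempty meld slots, contributing at most $9$ tiles in total. Hence either $\pi(5)\neq\varnothing$ and the second alternative of Proposition~\ref{prop:normal} is met, or $\pi(5)=\varnothing$, in which case $|\pi(0)|\geq 14-9=5\geq 4$ and the first alternative is met. Applying Proposition~\ref{prop:normal} then yields a completable p-decomposition $\hat{\pi}$ with no empty subsequences at all and with $\cost_T(\hat{\pi})<\cost_T(\pi)$, from which $\hat{\pi}(i)\neq\varnothing$ for every $1\leq i\leq 4$ follows \emph{a fortiori}.

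There is essentially no obstacle at the corollary level: the real combinatorial difficulty---the case analysis ruling out the existence of a saturated completable p-decomposition with an empty slot and nothing worth recycling from $\pi(0)$---has already been discharged inside Proposition~\ref{prop:normal}. The corollary is merely the packaged, user-friendly form of that proposition, tailored to the situation that arises in the quadtree procedure of Section~4.2, and its proof consists only of the few lines of tile accounting sketched above.
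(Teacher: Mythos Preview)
Your argument is correct and matches the paper's intent: the paper simply labels this a ``direct corollary'' of Proposition~\ref{prop:normal} with no further proof, and your tile count is precisely the elementary verification that the proposition's hypothesis (either $\pi(5)\neq\varnothing$ or $|\pi(0)|\geq 4$) is automatically met whenever some meld slot is empty. There is nothing to add.
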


\bibliographystyle{plain}
\bibliography{mahjong}

\end{document}